\definecolor{colorA}{rgb}{0.12156862745098039, 0.4666666666666667, 0.7058823529411765    }
\definecolor{colorB}{rgb}{0.8392156862745098, 0.15294117647058825, 0.1568627450980392}
\definecolor{colorC}{rgb}{0.17254901960784313, 0.6274509803921569, 0.17254901960784313}
\newcommand{\shortparagraph}[1]{\textbf{#1}\quad}
\newcommand{\vw}[1]{\textcolor{blue}{[VW: #1]}}
\DeclareRobustCommand\samplesquare[1]{%
\tikz{\node[draw,scale=0.75,regular polygon, regular polygon sides=4,fill=#1](){};}
}
\newcommand{\calF}{\mathcal{F}}
\newcommand{\bbP}{\mathbb{P}}
\newcommand{\bbR}{\mathbb{R}}
\newcommand{\bbD}{\mathbb{D}}
\newcommand{\bbN}{\mathbb{N}}
\newcommand{\bbE}{\mathbb{E}}
\newcommand{\bbQ}{\mathbb{Q}}
\newcommand{\bbC}{\mathbb{C}}
\newcommand{\calX}{\mathcal{X}}
\newcommand{\calY}{\mathcal{Y}}
\newcommand*{\mathcolor}{}
\def\mathcolor#1#{\mathcoloraux{#1}}
\newcommand*{\mathcoloraux}[3]{%
  \protect\leavevmode
  \begingroup
    \color#1{#2}#3%
  \endgroup
}
\newcommand{\mypm}{\mathbin{\mathpalette\@mypm\relax}}
\newcommand{\@mypm}[2]{\ooalign{%
		\raisebox{.1\height}{$#1+$}\cr
		\smash{\raisebox{-.6\height}{$#1-$}}\cr}}
\newtheorem{definition}{Definition}
\newtheorem{lemma}{Lemma}
\newtheorem{theorem}[lemma]{Theorem}
\title{Generalized Variational Inference in Function Spaces:\\ Gaussian Measures meet Bayesian Deep Learning}
\author{%
  Veit D.~Wild\thanks{equal contribution, order decided by coinflip} \\
  Department of Statistics\\
  University of Oxford\\
  29 St Giles', Oxford OX1, UK \\
  \texttt{veit.wild@stats.ox.ac.uk} \\
  \And
  Robert Hu$^*$ \thanks{Work primarily done at the University of Oxford and finished at Amazon.} \\
    Amazon\\
  \texttt{robyhu@amazon.co.uk} \\
  \AND
  Dino Sejdinovic \\
  Department of Statistics \\
  University of Oxford \\
  29 St Giles', Oxford OX1, UK \\
  \texttt{dino.sejdinovic@stats.ox.ac.uk} 
}
\begin{document}

\maketitle

\begin{abstract}
We develop a framework for generalized variational inference in infinite-dimensional function spaces and use it to construct a method termed Gaussian Wasserstein inference (GWI). GWI leverages the Wasserstein distance between Gaussian measures on the Hilbert space of square-integrable functions in order to determine a variational posterior using a tractable optimization criterion. It avoids pathologies arising in standard variational function space inference. An exciting application of GWI is the ability to use deep neural networks in the variational parametrization of GWI, combining their superior predictive performance with the principled uncertainty quantification analogous to that of Gaussian processes. The proposed method obtains state-of-the-art performance on several benchmark datasets.
\end{abstract}
\vspace{-0.3cm}
\section{Introduction}
\vspace{-0.3cm}

In the past decade, considerable effort has been invested in developing Bayesian deep learning approaches \citep{welling2011bayesian,chen2014stochastic,blundell2015weight,gal2016dropout,kendall2017uncertainties,ritter2018scalable,khan2018fast,maddox2019simple}. There are at least two key advantages to Bayesian models. Firstly, Bayesian model averaging is known to improve predictive performance \citep{komaki1996asymptotic} even in misspecified situations \citep{fushiki2005bootstrap,ramamoorthi2015posterior}. The empirical success of methods such as deep ensembles \citep{lakshminarayanan2017simple} may be interpreted as compelling evidence for this claim \citep{wilson2020bayesian}. Secondly, Bayesian models provide the user with a predictive distribution for an unseen data point. This can be naturally leveraged to quantify posterior uncertainty.

Even though impressive progress has been made, there are problems that remain unresolved. The prior distribution for the unknown function is typically induced by a prior distribution over deep neural network weights (and biases). It is hard to interpret the inductive bias in a function space that is induced by such priors for weights and unclear how one might incorporate prior knowledge about the unknown function. Additionally, the resulting inference problem is extremely high-dimensional and requires approximation techniques that are either computationally expensive \citep{neal2012bayesian} or so crude that the approximate posterior may suffer from pathological behavior \citep{foong2020expressiveness}. The difficulties of performing Bayesian inference for weights have led to the emergence of methods that approach the problem in function space directly \citep{ma2019variational,sun2019functional,rudner2020rethinking,ma2021functional}. 

The theory of constructing prior distributions in function spaces is well developed and the most famous class of prior distributions are \textit{Gaussian processes}. They have been commonly used for decades in the machine learning community to elicit interpretable functional priors and are known to have well-calibrated predictive uncertainties \citep{rasmussen2003gaussian}.

In a separate thread of research, a new powerful inference framework called \textit{Generalized Variational Inference} (GVI) has been recently developed \citep{knoblauch2019generalized}. The authors argue that standard assumptions of Bayesian inference such as well-specified priors, well-specified likelihoods and infinite computing power are often violated in practice. They therefore propose a generalized view on Bayesian inference that takes these points into consideration. We extend the work of \citet{knoblauch2019generalized} to situations where no probability density functions for the prior exist and are thus able to use generalized variational inference in infinite-dimensional function spaces directly. We then specify both the prior and variational measures as Gaussian measures and measure their dissimilarity using the Wasserstein distance. This results in the method which we call \textit{Gaussian Wasserstein Inference in Function Spaces} (GWI-FS). 
An exciting application of our method is the ability to equip deep neural networks with uncertainty quantification using the framework analogous to that of Gaussian processes, resulting in a state-of-the-art method termed \emph{GWI-net}. Our main contributions are:
\vspace{-0.2cm}
\begin{itemize}[noitemsep]
    \item We create a general framework for inference in function space based on Gaussian measures on the space of square-integrable functions,
    \item We derive an objective function that can be expressed in terms of the \textit{parameters of the Gaussian measures},
    \item We derive a tractable approximation to our objective function that is valid for (almost) arbitrary kernels and mean functions,
    \item We demonstrate the utility of our method by obtaining state-of-the-art results on the UCI regression datasets and on Fashion MNIST and CIFAR 10\footnote{Codebase: \hyperref[https://anonymous.4open.science/r/GWI-D7CA/]{https://anonymous.4open.science/r/GWI-D7CA/}}.
\end{itemize}

\vspace{-0.3cm}
\section{Related Work}
\vspace{-0.3cm}

GWI-FS draws on the work developed in the Gaussian process literature, but can be used to equip traditional neural network architectures with uncertainty. We therefore give a brief overview of the relevant related methods in both the Bayesian neural network (BNNs) and Gaussian process community. 

\shortparagraph{Bayesian neural networks} Traditionally Bayesian neural networks have been assigned priors in weight space. The effects of various priors on inference and uncertainty quantification are still not well understood \citep{fortuin2021bayesian}. As the posterior (over weights) is intractable, sampling algorithms such as Hamiltonian Monte Carlo (HMC) were initially proposed \citet{neal2012bayesian}. Due to the unfavorable scaling properties of standard HMC which requires the full gradient, batch-size approximations of HMC evolved \citep{chen2014stochastic}. Another line of research exploits Langevin dynamics to generate posterior samples \citep{welling2011bayesian} in weight space.

\shortparagraph{Variational methods for BNNs in weight space} In variational inference, the true posterior is approximated by a more tractable so-called \textit{variational} distribution. The user specifies a class of approximate posterior measures and selects the best posterior approximation by maximizing the so-called evidence lower bound (ELBO). The Bayes by Backprop \citep{blundell2015weight} method is one such variational mean-field approximation of the weight-space posterior. In variational dropout \citep{gal2016dropout}, a specific approximation is chosen to reinterpret dropout \citep{srivastava2014dropout} at test time as a variational procedure.

\shortparagraph{Variational methods for BNNs in function spaces}Inference in weight space is challenging, as the problem is typically high-dimensional and the posterior distribution over weights multi-modal. This led to a line of research in which inference algorithms are formulated in function spaces. Variational implicit processes \citep{ma2019variational} approximate the BNN posterior as a linear combination of draws from the prior. Functional-BNN \citep{sun2019functional} matches a BNN to a functional prior (for example a GP) and performs inference by optimising a functional Kullback-Leibler (KL) divergence exploiting score function estimators \citep{li2017gradient,shi2018spectral}. \citet{rudner2020rethinking} use a local approximation to the prior and variational posterior processes to obtain a tractable functional Kullback-Leibler divergence. \citet{ma2021functional} generalise the variational family in \citet{ma2019variational} and obtain a more scalable procedure by using a different approximation to the functional KL-divergence. Recent work has also proposed to adapt BNN priors to interpretable functional priors by minimizing the Wasserstein distance between a BNN prior and a Gaussian process \citep{tran2020all}. Another line of research exploits the Wasserstein gradient flow and tries to encourage diversity in the function space \citep{d2021stein,d2021repulsive}.

\shortparagraph{Gaussian processes} Standard Gaussian process regression \citep{rasmussen2003gaussian} allows interpretable prior specification but scales poorly with respect to the number of data points. As a result, a plethora of approximation techniques are introduced. On one hand, there are variational approximations to the true posterior \citep{titsias2009variational,hensman2013gaussian} and several extensions \citep{hensman2017variational,salimbeni2018orthogonally,dutordoir2020sparse}. On the other hand, GPU utilization is combined with Krylov subspace methods to obtain scalability \citep{gardner2018gpytorch,wang2019exact}.

\vspace{-0.3cm}
\section{Background}
\vspace{-0.3cm}

In this section we give some background on generalized variational inference in infinite dimensions and introduce Gaussian measures in Hilbert spaces. We further discuss their relation to the more familiar Gaussian processes at the end.

\vspace{-0.2cm}
\subsection{Generalized Variational Inference in Function Spaces}\label{subsec:GVI_FSI}
\vspace{-0.2cm}

In functional variational inference, we assign a prior $p(f)$ to the unknown function $f \in E$, where $E$ is a function space\footnote{We assume $E$ to be a Polish space, which avoids technical difficulties in defining the posterior measure \citep[Chapter 1.3 ]{ghosal2017fundamentals}}. The prior is combined with the likelihood $p(y|f)$ to give the posterior $p(f|y)$. The posterior is often intractable which is why in variational inference we specify a tractable variational approximation $q(f)$ to $p(f|y)$ and train our model by maximising the evidence lower bound (ELBO) 
\begin{align}
    \mathcal{L}= \bbE_{q(f)} \big[ \log p(y|f) \big] -  \bbD_{\text{KL}} \big( q(f) , p(f) \big),
\end{align}
where $\bbD_{\text{KL}}$ denotes the KL divergence. Note that in the case where $E$ is infinite dimensional $p(f)$ and $q(f)$ cannot be probability density functions with respect to the Lebesgue measure \citep[see e.g.][for a discussion]{hunt1992prevalence}, which is why the above notation, although commonly used, is imprecise. What we in fact mean are the probability measures over $E$ associated with the prior and variational approximation. We will denote these measures as $\bbP^F$ and $\bbQ^F$ from now on to make this difference explicit. The ELBO in this notation reads as
\begin{align}\label{eq:ELBO}
    \mathcal{L}:= \bbE_{\bbQ} \big[ \log p(y|F) \big] - \bbD_{\text{KL}} \big( \bbQ^F , \bbP^F \big).
\end{align}
Note that the KL divergence (for measures) is defined as 
\begin{equation}\label{eq:def_KLD}
    \bbD_{\text{KL}} \big( \bbQ^F , \bbP^F \big) = \int \log \left( \frac{d \bbQ^F}{d\bbP^F}(f) \right) \, d\bbQ^F(f),
\end{equation}
where we assume that $\bbQ^F$ is dominated by the measure $\bbP^F$ which guarantees the existence of the Radon-Nikodym derivative $d\bbQ^F/d\bbP^F$. A number of papers focus on obtaining tractable approximations of \eqref{eq:def_KLD} \citep{sun2019functional,rudner2020rethinking,ma2021functional}. However, the use of KL-divergence in infinite-dimensional function spaces can be a delicate task, since benign constructions of priors and variational approximations may not satisfy that $\bbQ^F$ is dominated by $\bbP^F$ which leads to $\bbD_{\text{KL}} \big( \bbQ^F , \bbP^F \big) = \infty$ \citep{burt2020understanding}. This often renders the objective \eqref{eq:ELBO} useless or at least problematic.

A \textit{true Bayesian} is committed to the use of the KL divergence in \eqref{eq:ELBO} as maximizing $\mathcal{L}$ is equivalent to minimizing the KL divergence between the true posterior measure and the variational measure. This equivalence is typically demonstrated using pdfs but the argument generalizes to infinite dimensions as is shown for GPs in \citet{matthews2016sparse} or in a more measure theoretic formulation in Theorem 4 of \citet{wild2021variational}. 

However, \citet{knoblauch2019generalized} argue that given the problems of prior and likelihood specification as well as available compute, an axiomatically justified way of moving from prior to posterior beliefs is by solving a more general optimization problem \citep[Theorem 15]{knoblauch2019generalized}. Crucially it is valid to replace the KL-divergence by an arbitrary measure of dissimilarity $\mathbb{D}$ satisfying $\bbD(\bbQ^F, \bbP^F) \ge 0 $ and $\bbD(\bbQ^F, \bbP^F)=0 \Rightarrow \bbQ^F = \bbP^F$. The arguments in \citet{knoblauch2019generalized} are made assuming the existence of a pdf for the prior, but they rely solely on a reformulation of Bayesian inference as optimization problem \citep[Chapter 2]{knoblauch2019generalized}. We show in Appendix \ref{ap:Bayes_as_Opt} that this reformulation can also be made for infinite-dimensional prior measures and therefore consider the generalized loss  
\begin{equation}\label{eq:Generalized_loss}
    \mathcal{L}:= -\bbE_{\bbQ} \big[ \log p(y|F) \big] + \bbD \big( \bbQ^F , \bbP^F \big),
\end{equation}
a valid optimization objective for an arbitrary dissimilarity measure $\bbD$. This is merely an infinite-dimensional version of equation (10) in \citet{knoblauch2019generalized}. We refer to inference targeting the objective  \eqref{eq:Generalized_loss} as \emph{Generalized variational inference in function space} (GVI-FS). 

Generalised variational inference can be interpreted as regularised loss minimisation lifted into the space of probability measures. The first term in  \eqref{eq:Generalized_loss} is understood as a loss which we want to minimise on average, while the second term punishes strong deviations from the prior.


The particular instance of GVI-FS that we explore is where both $\bbP^F$ and $\bbQ^F$ are Gaussian measures (on an infinite-dimensional Hilbert space)  and $\bbD$ is chosen to be the Wasserstein metric \citep{kantorovich1960mathematical}. We will refer to this setting as \textit{Gaussian Wasserstein Inference in Function Space} (GWI-FS) or more consciously as \textit{Gaussian Wasserstein Inference} (GWI)

\vspace{-0.2cm}
\subsection{Gaussian Random Elements and Gaussian Measures in Hilbert spaces} 
\vspace{-0.2cm}

In this section we introduce Gaussian random elements (GRE) and Gaussian measures in Hilbert spaces -- these concepts are somewhat technical but crucial in the construction of our method. We then describe their close relationship to the more familiar Gaussian process notions in the next section. 

Let $\big( \Omega,\mathcal{A},\bbP \big)$ be the underlying (physical) probability space and $\big( H , \langle \cdot, \cdot \rangle \big)$ be a Hilbert space. 

\shortparagraph{Gaussian random elements} A measurable function $F: \Omega \to H$ is called GRE (in $H$) if and only if  $\langle F,h \rangle: \Omega \to \bbR$ has a scalar Gaussian distribution for all $h \in H$.\footnote{We allow for the degenerate case where the variance of $\langle F, h \rangle$ is zero. This means we interpret a Gaussian with variance zero as Dirac measure.} Every GRE $F$ has a mean element $m \in H$ defined by 
\begin{equation}
    m:= \int F(\omega) \, d\bbP(\omega) 
\end{equation}
and a (linear) covariance operator $C: H \to H$ defined by
\begin{equation}
    Ch(\cdot) := \int \langle F(\omega),h \rangle F(\omega) \, \bbP(\omega) - \langle m,h \rangle m.
\end{equation}
for $h \in H$. Both integrals are to be understood as Bochner integrals \citep[Chapter 3]{kukush2020gaussian}. The Bochner integral has the property that 
$
    \big \langle \int F(\omega) \, d\bbP(\omega), h \rangle = \int \langle F(\omega), h \rangle \, d \bbP(\omega)
$
for all $h \in H$. This combined with Fubini's theorem and the definition of a GRE implies that
\begin{equation}
    \langle F, h \rangle \sim \mathcal{N}( \langle m, h\rangle , \langle Ch , h \rangle \big),
\end{equation}
for any $h \in H$ with $\mathcal{N}(\mu, \sigma^2)$ denoting the normal distribution with mean $\mu \in \bbR$ and variance $\sigma^2 > 0$. Similarly we denote $F \sim \mathcal{N}(m, C)$ for a GRE in $H$ with mean element $m$ and covariance operator $C$. It can be shown that the covariance operator $C$ of a GRE is a positive self-adjoint trace-class operator. Conversely, for every positive self-adjoint trace class operator and every $m \in H$, there exists a GRE with $F \sim \mathcal{N}(m,C)$ \citep[Theorem 2.3.1]{Bogachev1998}.

\shortparagraph{Gaussian measures} The push-forward measure of $\bbP$ through $F$ is defined as $\bbP^F(A):= \bbP\big( F^{-1}(A) \big)$ for all Borel-measurable $A \subset H$. If $F\sim \mathcal{N}(m,C)$ is a GRE, we call $P:= \bbP^F$ a GM and write $P=\mathcal{N}(m,C)$. Note that GMs or equivalently GREs allow us to specify probability distributions over (infinite-dimensional) Hilbert spaces by using a given mean element and a given covariance operator. 

Details about Gaussian Measures in Hilbert spaces can be found in Chapter 2 of \citet{da2014stochastic} or in \citet{kukush2020gaussian}. In fact, Gaussian measures can be defined on even more general linear spaces such as Banach or Fréchet spaces \citep{Bogachev1998}.
\vspace{-0.2cm}
\subsection{Gaussian Processes and Their Corresponding Measures}
\vspace{-0.2cm}

In this section we describe how Gaussian processes -- a standard tool to assign functional priors in Bayesian machine learning -- are related to Gaussian measures.

Let $\big( \Omega,\mathcal{A},\bbP \big)$ be the underlying (physical) probability space and $\calX \subset \bbR^D$ be measurable. The (product-) measurable mapping $G:\Omega \times \calX \to \bbR$ is called a Gaussian process (GP) if and only if for all $N \in \bbN$ and all $X=\{x_n\}_{n=1}^N \subset \calX$ the random vector $G(X):=\big( G(\cdot,x_1), \hdots, G(\cdot,x_N) \big)^T$ is multivariate Gaussian. For a GP $G$ we define a mean function $m(x):= \bbE \big[ G(x) \big]$, $x \in \calX$, and a covariance function by $k(x,x'):= \bbC[G(x), G(x') \big]$ for $x,x' \in \calX$. Here $\bbE$ denotes the expected value and $\bbC[\cdot,\cdot]$ the covariance. It follows from the definition that $G(X) \sim \mathcal{N}\big( m(X), k(X,X) \big)$ for any $\{x_n\}_{n=1}^N \subset \calX$, where we define $m(X):= \big( m(x_n) \big)_{n=1}^N$ and $k(X,X):= \big( k(x_n,x_{n'}) \big)_{n,n'=1}^N$. We write $G \sim GP(m,k)$ for a GP with mean function $m$ and covariance function $k$. Note that by the properties of the covariance we know that $k(X,X)$ is a (symmetric) positive semi-definite matrix for all $\{x_n\}_{n=1}^N \subset \calX$ and $N \in \bbN$. A function with this property is called \textit{kernel}, a terminology that we adopt henceforth. Kolmogorov's existence theorem \citep[Section 36]{billingsley2008probability} guarantees the existence of a Gaussian process for any kernel $k$ and any mean function $m$. The standard reference for Gaussian processes in machine learning is \citet{rasmussen2003gaussian}.

The main advantage of Gaussian processes in specifying priors over a function space is that the kernel $k$ allows us to incorporate readily interpretable prior assumptions, such as smoothness or periodicity. For example, choosing the squared exponential kernel \citep{rasmussen2003gaussian} implies that the unknown function is infinitely differentiable and that the correlation of the functional output is higher the closer the inputs are. 

In order to insert the Gaussian process prior into our generalized loss in \eqref{eq:Generalized_loss} we need to know the probability measure that is associated to the Gaussian process. In general, we can associate more than one Gaussian measure with a given Gaussian process. For example:
\vspace{-0.2cm}
\begin{itemize}[noitemsep]
    \item If the GP has continuous sample paths we can associate a Gaussian measure on the space $E$ of continuous functions with it \citep[Example 2.4]{lifshits2012lectures}. 
    \item If the GP has square-integrable sample paths we can associate a Gaussian measure on the Hilbert space of square-integrable functions with it (cf. Theorem \ref{thm:GP_as_GM}).
\end{itemize}
These sample path properties can be guaranteed under additional assumptions on the kernel. The next theorem discusses one such kernel condition which guarantees the GP to have sample paths in the Hilbert space of square integrable functions, denoted  $L^2(\calX,\rho,\bbR)$, with inner product  $
    \langle g, h \rangle_2 := \int_\calX g(x) h(x) \, d \rho(x).
$

\begin{theorem}\label{thm:GP_as_GM}
Let $F \sim GP(m,k)$ be a GP with mean $m \in L^2(\calX,\rho,\bbR)$ and kernel $k$ such that
\begin{equation}\label{eq:kernel_cond}
  \int_{\calX} k(x,x) \, d \rho(x) < \infty .
\end{equation}
We call a kernel satisfying \eqref{eq:kernel_cond} trace-class kernel. Then the mapping $\widetilde{F}: \Omega \to L^2(\calX,\rho,\bbR)$ defined as $\widetilde{F}(\omega):=F(\omega,\cdot)$ is a Gaussian random element with mean $m$ and covariance operator C given as 
\begin{equation}\label{eq:cov_op}
    Cg(\cdot):= \int k(\cdot,x')g(x') \, d \rho(x')
\end{equation}
for any $g \in L^2(\calX,\rho,\bbR)$. Consequently $P:=\bbP^F \sim \mathcal{N}(m,C)$ is a Gaussian measure.
\end{theorem}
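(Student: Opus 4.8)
The plan is to verify, in order, the three defining properties of a Gaussian random element for $\widetilde F$ valued in $H := L^2(\calX,\rho,\bbR)$: that it is almost surely well defined (its sample paths lie in $H$), that it is Borel measurable as an $H$-valued map, and that $\langle \widetilde F, h\rangle_2$ is scalar Gaussian for every $h \in H$; the mean element and covariance operator are then read off by direct computation, after which the conclusion $P = \bbP^F \sim \mathcal N(m,C)$ is immediate from the definition of a Gaussian measure given in the excerpt.

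First I would show $\widetilde F(\omega) \in H$ for $\bbP$-almost every $\omega$. Since $F$ is product measurable, $(\omega,x) \mapsto F(\omega,x)^2 \ge 0$ is jointly measurable, so Tonelli's theorem gives
\begin{equation}
    \bbE \big[ \| \widetilde F \|_2^2 \big] = \int_\calX \bbE\big[ F(x)^2 \big] \, d\rho(x) = \int_\calX \big( k(x,x) + m(x)^2 \big) \, d\rho(x),
\end{equation}
which is finite because $k$ is trace class (condition \eqref{eq:kernel_cond}) and $m \in H$. Hence $\|\widetilde F\|_2 < \infty$ almost surely and $\widetilde F$ takes values in $H$. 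For measurability I would invoke the Pettis measurability theorem: $H$ is separable, so Borel measurability is equivalent to weak measurability, and $\omega \mapsto \langle \widetilde F(\omega), h\rangle_2 = \int_\calX F(\omega,x) h(x)\, d\rho(x)$ is measurable for each fixed $h$ by Fubini's theorem applied to the product-measurable integrand.

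The crux is the Gaussianity of $\langle \widetilde F, h\rangle_2$. I would argue that the centred functional $\langle \widetilde F - m, h\rangle_2$ belongs to the \emph{Gaussian Hilbert space} $\mathcal G \subset L^2(\Omega,\bbP)$ defined as the $L^2(\bbP)$-closed linear span of the centred point evaluations $\{ F(x) - m(x) : x \in \calX\}$. Every finite linear combination $\sum_i w_i \big( F(x_i) - m(x_i)\big)$ is Gaussian by the defining finite-dimensional Gaussianity of the GP, and an $L^2(\bbP)$-limit of Gaussians is Gaussian (the limit inherits convergent mean and variance), so every element of $\mathcal G$ is Gaussian. It therefore suffices to show that $\langle \widetilde F - m, h\rangle_2$ lies in $\mathcal G$, i.e.\ that the stochastic integral can be approximated in $L^2(\bbP)$ by finite linear combinations of point evaluations. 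I expect \textbf{this step to be the main obstacle}, as it requires controlling $\int_\calX (F(x)-m(x)) h(x)\,d\rho(x)$ as a genuine $L^2(\bbP)$-limit rather than a merely pointwise-in-$\omega$ object; the cleanest route is to view $x \mapsto F(\cdot,x) - m(x)$ as a Bochner-integrable $L^2(\bbP)$-valued map (integrability follows from $\int_\calX \sqrt{k(x,x)}\,|h(x)|\,d\rho(x) \le \big(\int k(x,x)\,d\rho\big)^{1/2} \|h\|_2 < \infty$ by Cauchy--Schwarz) and to note that its Bochner integral against $h\,d\rho$ stays in the closed subspace $\mathcal G$ and coincides $\bbP$-a.s.\ with the pointwise integral by Fubini. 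Adding back the constant $\langle m, h\rangle_2$ then shows $\langle \widetilde F, h\rangle_2$ is Gaussian.

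Finally I would identify the parameters. For the mean, Fubini (justified by the same Cauchy--Schwarz bound) gives $\bbE\langle \widetilde F, h\rangle_2 = \int_\calX m(x) h(x)\,d\rho(x) = \langle m, h\rangle_2$ for all $h$, so the mean element is $m$. For the covariance operator $\widetilde C$ of $\widetilde F$, I would compute, for $g,h \in H$,
\begin{align}
    \langle \widetilde C g, h\rangle_2
    &= \bbE\big[ \langle \widetilde F - m, g\rangle_2 \, \langle \widetilde F - m, h\rangle_2 \big] \\
    &= \int_\calX \int_\calX \bbE\big[(F(x)-m(x))(F(x')-m(x'))\big]\, g(x') h(x)\, d\rho(x')\, d\rho(x) \\
    &= \int_\calX \Big( \int_\calX k(x,x') g(x')\, d\rho(x') \Big) h(x)\, d\rho(x) = \langle C g, h\rangle_2,
\end{align}
where interchanging expectation and the double integral is justified by Tonelli together with $\bbE|F(x)-m(x)||F(x')-m(x')| \le \sqrt{k(x,x)k(x',x')}$ and the Cauchy--Schwarz bound above. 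Since this holds for all $g,h$, the covariance operator equals the integral operator $C$ of \eqref{eq:cov_op}; in particular $\tr C = \int_\calX k(x,x)\,d\rho(x) < \infty$, confirming that $C$ is trace class. Having shown $\widetilde F \sim \mathcal N(m,C)$, the pushforward $P = \bbP^F$ is by definition a Gaussian measure $\mathcal N(m,C)$, which completes the proof.
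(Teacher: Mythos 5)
Your proof is correct, but it is a genuinely different, self-contained argument from the paper's, which disposes of the theorem in two sentences by citing Example 2.3.16 of Bogachev (1998) for the GRE property and Fubini for the identification of $m$ and $C$. What you do instead is reprove the Bogachev ingredient from first principles: Tonelli for almost-sure membership of the paths in $L^2(\calX,\rho,\bbR)$, Pettis for measurability, and a Gaussian-Hilbert-space argument --- $L^2(\bbP)$-limits of Gaussians are Gaussian, and the functional $\langle \widetilde F - m, h\rangle_2$ lies in the closed span $\mathcal G$ of the centred evaluations because it agrees a.s.\ with a Bochner integral of the $L^2(\bbP)$-valued map $\Phi: x \mapsto F(\cdot,x)-m(x)$, which stays in the closed subspace $\mathcal G$. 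This is exactly the standard proof underlying the cited example, and your identification of the $L^2(\bbP)$-approximation step as the crux is accurate; your route buys transparency (every hypothesis --- product measurability, $m \in L^2$, the trace-class condition \eqref{eq:kernel_cond} --- is visibly used, and the Cauchy--Schwarz bound $\int \sqrt{k(x,x)}\,|h(x)|\,d\rho(x) \le (\int k\,d\rho)^{1/2}\|h\|_2$ does the quantitative work), whereas the paper buys brevity. Two details you should tighten: first, Bochner integrability of $\Phi$ against $h\,d\rho$ requires not just the norm bound you give but \emph{strong} $\rho$-measurability of $\Phi$; this does hold, but it needs an argument (weak measurability via Fubini plus essential separable-valuedness, or more directly a monotone-class argument starting from indicators of measurable rectangles $A\times B$, where $x \mapsto \mathbbm{1}_A(x)\mathbbm{1}_B(\cdot)$ is trivially strongly measurable, followed by truncation of $F$), since $x \mapsto F(\cdot,x)$ need not be $L^2(\bbP)$-continuous when $k$ is not continuous. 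Second, $\widetilde F(\omega) \in H$ only holds off a $\bbP$-null set, so one should redefine $\widetilde F := 0$ there to obtain a genuinely $H$-valued map; and to conclude a.s.\ equality of the Bochner and pointwise integrals from testing against $G \in L^2(\bbP)$, note that the pointwise integral is in $L^1(\bbP)$ by your bound, which suffices. Neither point is a conceptual gap; with them filled, your proof is complete and strictly more informative than the paper's citation-based one.
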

\begin{proof}
The fact that $\widetilde{F}$ as defined above is a GRE follows immediately from Example 2.3.16 in \citet{Bogachev1998}. The fact that $m$ is its mean and $C$ as defined in \eqref{eq:cov_op} is its covariance operator follows from Fubini's theorem.
\end{proof}
It shall be noted that there is no need to appeal to GPs in order to justify the use of GMs. In fact, it has recently been demonstrated that variational inference for GPs can be formulated purely in terms of GMs \citep{wild2021variational}. In the following sections we will therefore deploy GMs without any reference to GPs, but it is of course always possible to think of them as the measures that correspond to GPs where the kernel satisfies an additional assumption such as \eqref{eq:kernel_cond}.

\vspace{-0.3cm}
\section{Gaussian Wasserstein Inference in Function Spaces}
\vspace{-0.3cm}

This section describes how the Wasserstein distance between Gaussian measures can be used to obtain a tractable optimization target for inference in function spaces. In the end, we discuss several parametrizations of GWI and introduce our main inference method - the GWI-net.
\vspace{-0.2cm}
\subsection{Model description}
\vspace{-0.2cm}

Let $\{(x_n,y_n)\}_{n=1}^N \subset \calX \times \calY$ be $N \in \bbN$ paired observations. We assume that $\calX \subset \bbR^D$, $D \in \bbN$ and further that $\mathcal{Y}=\bbR$ for regression and $\mathcal{Y}=\{1,\hdots,J\}$ for classification with $J \in \bbN$ classes. We focus in our exposition here on the regression case but have given the relevant derivations for classification in Appendix \ref{ap:Classification}.

As pointed out in section \ref{subsec:GVI_FSI}, GVI in function space minimises the generalized loss $ \mathcal{L}= -\bbE_{\bbQ} \big[ \log p(y|F) \big] + \bbD \big( \bbQ^F , \bbP^F \big)$.
We make the mild assumption that the unknown function $f$ is square integrable with respect to the data distribution $\rho$ on $\calX$ which means $f \in E= L^2(\calX,\rho,\bbR)$. The prior $P:=\bbP^{F}$ is described by a Gaussian measure with mean $m_P \in \mathcal{L}^2(\calX,\rho, \bbR)$ and covariance operator $C_P$ described by a trace-class kernel $k:\calX \times \calX \to \bbR$ which means it is given as
$
    (C_Pf)(x) := \int_\calX k(x, x') f(x') \, d\rho(x')
$
for all $f \in L^2(\calX, \rho, \bbR)$. We assume a Gaussian likelihood for $y:=(y_1,\hdots,y_N)$ given as $p(y|f):= \prod_{n=1}^N p(y_n|f)$\footnote{Astute readers may notice that the definition of the likelihood contains a pointwise evaluation $f(x_n)$ which may not be a well defined operation on $L^2(\calX,\rho,\bbR)$. We detail in Appendix \ref{ap:eq:ptw_ev} how that problem can be circumvented and that in fact $F(x) \sim \mathcal{N}(m(x),k(x,x))$ as one would expected.} with 
\begin{equation}
    p(y_n|f) := \mathcal{N}(y_n \, | \, f(x_n), \sigma^2),
\end{equation}
where $\mathcal{N}(\cdot \, | \, \mu, \sigma^2) $ denotes the pdf of a normal distribution with mean $\mu \in \bbR$ and variance $\sigma^2>0$. This prior and likelihood are natural choices as they mimic the standard formulation of Gaussian process regression. The variational approximation of the posterior is chosen to be another Gaussian measure $Q:=\bbQ^F$ with arbitrary mean $m_Q \in L^2(\calX, \rho, \bbR) $ and arbitrary covariance operator $C_Q$ induced by a trace-class kernel $r$:
$
    (C_Qf)(x) := \int_\calX r(x, x') f(x') \, d\rho(x')
$
for all $f \in L^2(\calX, \rho, \bbR)$.

It remains for us to select a dissimilarity measure $\bbD$. As already pointed out in the introduction we decide to use the Wasserstein distance $W_2$ (a formal definition is given in Appendix \ref{ap:WD}). This choice was guided by two considerations:
\begin{enumerate}
    \item The Wasserstein metric was proven to be a useful metric for probability distributions in machine learning applications \citep{arjovsky2017wasserstein,tran2020all}. Furthermore the Wasserstein metric is known to have desirable statistical properties \citep{panaretos2019statistical}.
    \item The Wasserstein distance is tractable for arbitrary Gaussian measures on (separable) Hilbert spaces \citep{gelbrich1990formula} and given as
    \begin{equation}\label{eq:WD_Gaussians}
    W_2^2(P,Q) = \| m_P - m_Q \|_2^2 + tr(C_P) + tr(C_Q) - 2 \cdot tr \Big[ \big(C_P^{1/2} C_Q^{} C_P^{1/2} \big)^{1/2} \Big],
    \end{equation}
    where $tr$ denotes the trace of an operator and $C_P^{1/2}$ is the square root of the positive, self-adjoint operator $C_P$. This is in stark contrast to the KL-divergence that is infinite whenever $\bbQ^F$ is not dominated by $\bbP^F$ and even in the case where it is finite there exists no explicit formula for the KL-divergence in infinite dimensions.
\end{enumerate}

The generalized loss for our model is therefore given as 
\begin{align}
    \mathcal{L}= - \sum_{n=1}^N \bbE_{\bbQ} \Big[ \log \mathcal{N}\big(y_n \, | \, F(x_n), \sigma^2\big) \Big] + W_2(P,Q). \label{eq:GL_exact}
\end{align}
Note that the expected log-likelihood in \eqref{eq:GL_exact} can be calculated analytically as
\begin{equation}\label{eq:ELL}
\bbE_{\bbQ} \Big[ \log \mathcal{N}\big(y_n \, | \, F(x_n), \sigma^2\big) \Big] = -\frac{N}{2} \log( 2 \pi \sigma^2) -  \sum_{n=1}^N \frac{\big( y_n - m_Q(x_n) \big)^2 +r(x_n,x_n)}{2 \sigma^2} .
\end{equation}
It remains to produce an approximation of \eqref{eq:WD_Gaussians} in order to obtain a tractable inference procedure. To this end, note that by definition
$
    \| m_P - m_Q |_2^2 = \int \big ( m_P(x) - m_Q(x) \big)^2 \, d \rho(x)
$
and further 
$
    tr(C_P) = \int k(x,x) \, d \rho(x)
$ \citep{brislawn1991traceable}.
We now replace the true input distribution $\rho$ with the empirical data distribution $\widehat{\rho}:= \frac{1}{N} \sum_{n=1}^N \delta_{x_n}$, where $\delta_x$ denotes the Dirac measure in $x \in \calX$. This gives $\| m_P - m_Q \|_2^2 \approx \frac{1}{N} \sum_{n=1}^N \big( m_P(x_n) - m_Q(x_n) \big)^2$, $tr(C_P) \approx \frac{1}{N} \sum_{n=1}^N k(x_n,x_n)$ and $tr(C_Q)\approx \frac{1}{N} \sum_{n=1}^N r(x_n,x_n)$.
It remains to provide an approximation of $tr \Big[ \big(C_P^{1/2} C_Q^{} C_P^{1/2} \big)^{1/2} \Big]$. The key idea is to approximate the spectrum of $C_P^{1/2} C_Q^{} C_P^{1/2}$ by that of an appropriate kernel matrix. Details are discussed in Appendix \ref{ap:WD_approximation}. This leads to the following final approximation for the Wasserstein metric
\begin{align}
    \hat{W}^2:= &\frac{1}{N} \sum_{n=1}^N \big( m_P(x_n)-m_Q(x_n) \big)^2 + \frac{1}{N} \sum_{n=1}^N k(x_n,x_n\label{eq:WD_approx1})\\
    &+ \frac{1}{N} \sum_{n=1}^N r(x_n,x_n) - \frac{2}{\sqrt{N N_S}} \sum_{s=1}^{N_S} \sqrt{\lambda_s\big(  r(X_S,X)k(X,X_S) \big)}, \label{eq:WD_approx2}
\end{align}
where $X_S:=(x_{S,1},\hdots,x_{S,N_S}) $ with $x_{S,1},\hdots x_{S,N_S} \in \bbR^D $ being subsampled from the input data $X$. Further $r(X_S,X):= \big( r(x_{S,s},x_n) \big)_{s,n}$ and $k(X,X_S):= \big( k(x_n,x_{S,s}) \big)_{n,s}$ for $n=1,\hdots,N$, $s=1,\hdots,N_S$ and $\lambda_s\big(  r(X_S,X)k(X,X_S) \big)$ denotes the $s$-th eigenvalue of the matrix $r(X_S,X)k(X,X_S) \in \bbR^{N_S \times N_S}$. The approximation quality of $\widehat{W}$ is related to the spectral decay of the operator $C_P C_Q$, which in turn is determined by the kernels $k$ and $r$. For the choices made in Section \ref{sec:paramterisation} we empirically observe rapid spectral decay (cp. Appendix \ref{Ap:wasserstein_estimation error}) and therefore are confident that the 2-Wasserstein distance is estimated reliably for our method.


The combination of \eqref{eq:ELL}, \eqref{eq:WD_approx1} and \eqref{eq:WD_approx2} gives a generalized loss that is tractable in terms of $m_P,m_Q,k$, and $r$. If we disregard computation time of $m_P,m_Q,k$ and $r$, the generalized loss can be evaluated in $\mathcal{O}(N + N_S^2 N+ N_S^3) $, where typically $N_S \ll N$, e.g. $N_S=100$. We provide a batch version of our loss in Appendix \ref{ap:Batch_Mode} which reduces the computations to $\mathcal{O}( N_S^2 N_B + N_S^3)$ where $N_B \ll N$ is the batch-size. Note, however, that the final computation time for our method will be determined by the complexity hidden in the evaluation of $m_Q$, $m_P$, $k$, and $r$ as we need $N_B$ evaluations of $m_Q$ and $m_P$ and $N_S \cdot N_B$ evaluations of $r$ and $k$ per iteration.

\vspace{-0.2cm}

\subsection{Parameterisations of Prior and Variational Measure}\label{sec:paramterisation}
\vspace{-0.2cm}

The prior for our model is given as $P= \mathcal{N}(m_P,C_P)$ with $C_P$ induced by a trace-class kernel $k$. One of the advantages of the proposed approach is that any trace-class kernel is allowed and this is where one can incorporate specific assumptions and domain expertise. This is a thoroughly studied topic: the prior kernel can encode periodicity \citep{durrande2016detecting}, geometric intuition \citep{van2018learning}, and even model linear constraints for the unknown function \citep{jidling2017linearly}. In order to keep the exposition simple and maintain focus on the inference, however, and in line with using simple priors on network weights in standard Bayesian deep learning, we opt for a simple zero mean prior $m_P=0$ and a standard ARD kernel $k$ given as
\begin{equation}\label{eq:ARD_kernel}
 k(x,x') = \sigma_f^2 \exp \Big( - \frac{1}{2} \sum_{d=1}^D  \frac{(x_d - x'_d)^2}{\alpha_d^2}  \Big)  \end{equation}
for $x,x' \in \calX \subset \bbR^D$. We refer to $\sigma_f > 0$ as \textit{kernel scaling factor} and to $\alpha_d >0$ as \textit{length-scale} for dimension $d$. The parameters $\sigma_f$ and $\alpha:=(\alpha_1,\hdots,\alpha_D)$ are called \textit{prior hyperparameters}.

The rest of the section explores various choices for the variational mean $m_Q$ and the variational kernel $r$. The parameters appearing in the specification of $m_Q$ and $r$ are referred to as \textit{variational parameters}.

\shortparagraph{GWI: Stochastic variational Gaussian process} Let $z_1, \hdots, z_M \in \calX$ be a subsample of the data $X$ with $M \ll N$. We define the posterior mean 
\begin{equation}
    m_Q(x):= m_P(x) + \sum_{m=1}^M  \beta_m k_m(x) \label{eq:svgp_mean}
\end{equation}
with $\beta_m \in \bbR$ and $k_m(x):= k(x,z_m)$, $m=1,\hdots,M$ where $k$ is the prior kernel $k$ and $\beta:=(\beta_1,\hdots,\beta_M) \in \bbR^M$ are variational parameters. Define further the variational kernel
\begin{equation}\label{eq:r_SVGP}
        r(x,x') = k(x,x') - k_Z(x)^T k(Z,Z)^{-1} k_Z(x) +  k_Z(x)^T \Sigma k_Z(x),
\end{equation}
where $\Sigma \in \bbR^{M \times M}$ is the symmetric and positive definite variational covariance matrix that parameterises $r$.
This choice of $m_Q$ and $r$ essentially recovers the \textit{stochastic variational Gaussian processes} (SVGP)
model of \citet{titsias2009variational}. Note that in our framework it is straightforward to use all (or just more) basis functions for the mean $ m_Q(x):= m_P(x) + \sum_{n=1}^N \beta_n k_n(x)$ where $k_n(x):= k(x,x_n)$, $\beta_n \in \bbR$, $n=1,\hdots,N$. This mirrors the construction in \citet{cheng2017variational} where we allow more parameters to learn the mean than in SVGP. However, both \citet{titsias2009variational} and \citet{cheng2017variational} use a different objective function than GWI to learn the unknown parameters.
 
\shortparagraph{GWI: deep neural network with SVGP} An interesting approach is to parameterise the posterior mean as a deep neural network (DNN). We assume the DNN has $L \in \bbN$ hidden layers and the width of layer $\ell=1,\hdots,L$ is denoted $D_\ell$ with $D_0:=D$ and $D_{L+1}=1$. This means we define $g^{1}(x):= W^1 x + b^{1}$ and further
$
    h^{\ell}(x) := \phi \big( g^{\ell}(x) \big), \, 
    g^{\ell+1}(x):= W^{\ell+1}  h^{\ell}(x) + b^{\ell+1}
$
for $\ell=1,\hdots,L$. Here $W^{l+1}$ is $D_{\ell+1} \times D_\ell$ matrix, $b^{\ell+1} \in \bbR^{D_{\ell+1}}$ is a bias vector for layer $l$ and $\phi$ an activation function. We can then define the variational mean as $m_Q(x):= m_P(x) + g^{L+1}(x) $. If we choose the SVGP kernel $r$ in \eqref{eq:r_SVGP}, we essentially predict with a neural network and quantify uncertainty with a (sparse) Gaussian process, capturing the beneficial properties of both.

Neural networks have been combined in several ways with GPs \citep{wilson2016deep,tran2020all}. However, to the best of our knowledge they were not used to directly parametrize the posterior in the context of generalized variational inference in function space. The spirit of our approach is fundamentally different: rather than thinking of a neural network as a model which needs to be made Bayesian, we use it as a parametrisation of a variational posterior.

We note that we do not here provide an exhaustive study on how to best parameterize the variational measure. This paper is focused on demonstrating the ability of the proposed method to obtain valid uncertainty quantification. An exploratory study on how properties and quality of uncertainty quantification relate to different choices of $m_Q$ and $r$ is reserved for future work. We mention potential problems that can occur from misspecification in Appendix \ref{ap:model_misspecification}.
\vspace{-0.3cm}
\section{Experiments}
\vspace{-0.3cm}

We show results for GWI with the SVGP mean \eqref{eq:svgp_mean} and the SVGP kernel \eqref{eq:r_SVGP}. We use the shorthand GWI: SVGP for this approach. Additionally we implement the DNN mean with the SVGP kernel \eqref{eq:r_SVGP}. This combination achieves impressive results on various regression and classification tasks. We call this method GWI: DNN-SVGP or simply GWI-net.

\shortparagraph{Illustrative Examples} In \Cref{1d_illustration} we illustrate GWI-net on a few toy examples. One can clearly see that the posterior predictive variance expands for regions lacking observations which demonstrates the ability of our method to quantify uncertainty. We provide an additional graphic comparison with SVGP in Appendix \ref{ap:addtional_plots} and an example for two-dimensional inputs in Appendix \ref{ap:Illustrative_Example}


There we show that the pathologies regarding the quantification of in-between uncertainty discussed in \citet{foong2020expressiveness} are not present for our method.
\begin{figure}
    \centering
    \includegraphics[width=0.33\linewidth]{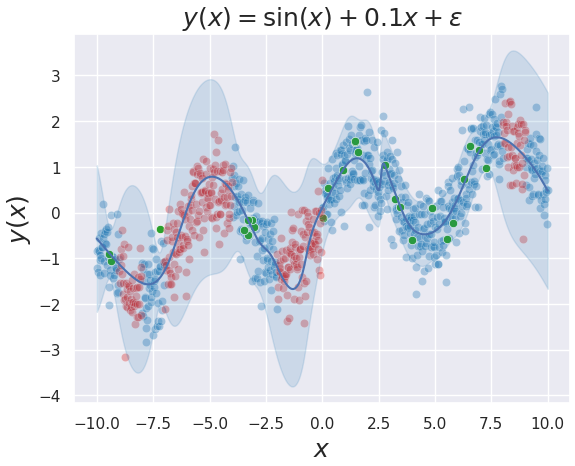}%
        \includegraphics[width=0.33\linewidth]{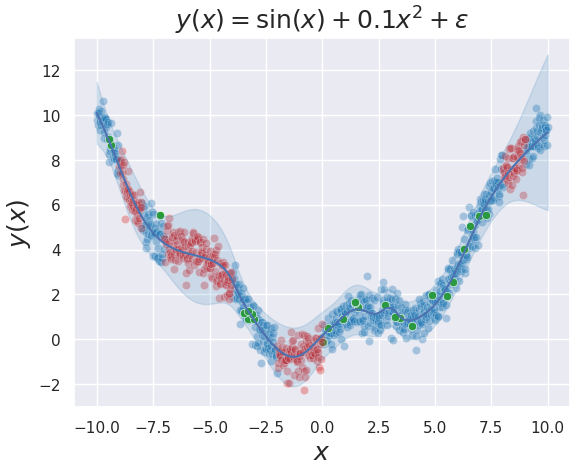}%
            \includegraphics[width=0.33\linewidth]{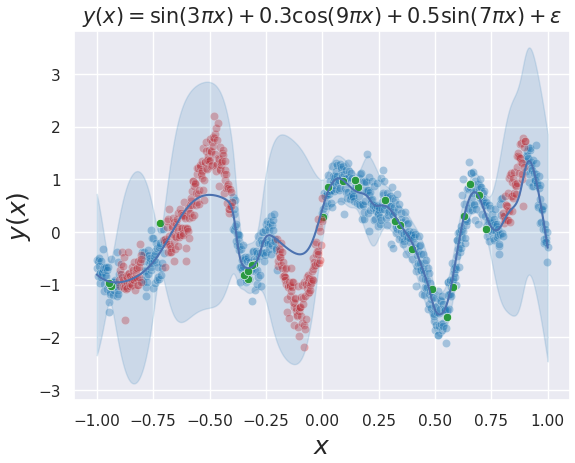}
    \caption{{\color{colorA} \samplesquare{}}: Training data $\quad$ {\color{colorB} \samplesquare{}}: Unseen data $\quad$ {\color{colorC} \samplesquare{}}: Inducing points  \\
    We query the above functions at $N=1000$ equidistant points and add white noise with $\epsilon \sim \mathcal{N}(0,0.5^2)$. We use $M=30$ inducing points and train our method as described in Appendix \ref{ap:Implementation}. The plot shows $m_Q(x)\pm1.96\sqrt{\mathbb{V}[Y^*(x)|Y]}$ where $\mathbb{V}[Y^*(x)|Y]$ is the posterior predictive variance given as $r(x,x)+\sigma^2$.  }
    \label{1d_illustration}
\end{figure}

\shortparagraph{UCI Regression} In Table \ref{tab:UCI} we report the average test negative log-likelihood (NLL) (cf. Appendix \ref{ap:Implementation} for details) of GWI: SVGP and GWI-net (GWI: DNN-SVGP) and the results of several weight-space approaches for BNNs: Bayes-by-Backprop (BBB) \citep{blundell2015weight}, variational dropout (VDO) \citep{gal2016dropout}, and variational alpha dropout ($\alpha = 0.5$) \citep{li2017dropout}. We also compare with
four function-space BNN inference methods: functional variational inference  with BNN prior (FVI) \citep{ma2021functional}, variationally implicit processes (VIP) with BNNs, VIP-Neural processes \citep{ma2019variational}, and functional BNNs (FBNNs) \citep{sun2019functional}. In order to ensure a fair comparison we matched neural network architectures and training procedures for the different methods. Detailed explanations are given in Appendix \ref{ap:Implementation}.


\begin{table}[hbt!]
\centering
 \resizebox{\linewidth}{!}{%
\begin{tabular}{lrrllrrrrrrl|rlrl}
\hline
\multicolumn{1}{c}{\multirow{2}{*}{Dataset}} & \multicolumn{1}{c}{\multirow{2}{*}{N}} & \multicolumn{1}{c}{\multirow{2}{*}{D}} & \multicolumn{2}{c}{GWI}                                      & \multicolumn{1}{c}{\multirow{2}{*}{FVI}} & \multicolumn{1}{c}{\multirow{2}{*}{VIP-BNN}} & \multicolumn{1}{c}{\multirow{2}{*}{VIP-NP}} & \multicolumn{1}{c}{\multirow{2}{*}{BBB}} & \multicolumn{1}{c}{\multirow{2}{*}{VDO}} & \multicolumn{2}{c|}{\multirow{2}{*}{$\alpha$ = 0.5}} & \multicolumn{2}{c}{\multirow{2}{*}{FBNN}} & \multicolumn{2}{c}{\multirow{2}{*}{\textbf{EXACT GP}}} \\
\multicolumn{1}{c}{}                         & \multicolumn{1}{c}{}                   & \multicolumn{1}{c}{}                   & \multicolumn{1}{c}{SVGP} & \multicolumn{1}{c}{DNN-SVGP}      & \multicolumn{1}{c}{}                     & \multicolumn{1}{c}{}                         & \multicolumn{1}{c}{}                        & \multicolumn{1}{c}{}                     & \multicolumn{1}{c}{}                     & \multicolumn{2}{c|}{}               & \multicolumn{2}{c}{}          & \multicolumn{2}{c}{}                 \\ \hline
BOSTON                                       & 506                                    & 13                                     & 2.8$\pm$0.31             & \textbf{2.27$\pm$0.06}            & 2.33$\pm$0.04                            & 2.45$\pm$0.04                                & 2.45$\pm$0.03                               & 2.76$\pm$0.04                            & 2.63$\pm$0.10                            & \multicolumn{2}{r|}{2.45$\pm$0.02}                   & \multicolumn{2}{r}{2.30$\pm$0.10}         & \multicolumn{2}{r}{2.46$\pm$0.04}                      \\
CONCRETE                                     & 1030                                   & 8                                      & 3.24$\pm$0.09            & \textbf{2.64$\pm$0.06}            & 2.88$\pm$0.06                            & 3.02$\pm$0.02                                & 3.13$\pm$0.02                               & 3.28$\pm$0.01                            & 3.23$\pm$0.01                            & \multicolumn{2}{r|}{3.06$\pm$0.03}                   & \multicolumn{2}{r}{3.09$\pm$0.01}         & \multicolumn{2}{r}{3.05$\pm$0.02}                      \\
ENERGY                                       & 768                                    & 8                                      & 1.81$\pm$0.19            & 0.91$\pm$0.12                     & 0.58$\pm$0.05                            & \textbf{0.56$\pm$0.04}                       & 0.60$\pm$0.03                               & 2.17$\pm$0.02                            & 1.13$\pm$0.02                            & \multicolumn{2}{r|}{0.95$\pm$0.09}                   & \multicolumn{2}{r}{0.68$\pm$0.02}         & \multicolumn{2}{r}{0.54$\pm$0.02}                      \\
KIN8NM                                       & 8192                                   & 8                                      & -0.86$\pm$0.38           & \textbf{-1.2$\pm$0.03}            & -1.15$\pm$0.01                           & -1.12$\pm$0.01                               & -1.05$\pm$0.00                              & -0.81$\pm$0.01                           & -0.83$\pm$0.01                           & \multicolumn{2}{r|}{-0.92$\pm$0.02}                  & \multicolumn{2}{l}{N/A$\pm$0.00}          & \multicolumn{2}{l}{N/A$\pm$0.00}                       \\
POWER                                        & 9568                                   & 4                                      & 3.35$\pm$0.22            & 2.74$\pm$0.02                     & \textbf{2.69$\pm$0.00}                   & 2.92$\pm$0.00                                & 2.90$\pm$0.00                               & 2.83$\pm$0.01                            & 2.88$\pm$0.00                            & \multicolumn{2}{r|}{2.81$\pm$0.00}                   & \multicolumn{2}{l}{N/A$\pm$0.00}          & \multicolumn{2}{l}{N/A$\pm$0.00}                       \\
PROTEIN                                      & 45730                                  & 9                                      & \textbf{2.84$\pm$0.04}   & 2.87$\pm$0.0                      & 2.85$\pm$0.00                            & 2.87$\pm$0.00                                & 2.96$\pm$0.02                               & 3.00$\pm$0.00                            & 2.99$\pm$0.00                            & \multicolumn{2}{r|}{2.90$\pm$0.00}                   & \multicolumn{2}{l}{N/A$\pm$0.00}          & \multicolumn{2}{l}{N/A$\pm$0.00}                       \\
RED WINE                                     & 1588                                   & 11                                     & 0.97$\pm$0.02            & \textbf{0.76$\pm$0.08}            & 0.97$\pm$0.06                            & 0.97$\pm$0.02                                & 1.20$\pm$0.04                               & 1.01$\pm$0.02                            & 0.97$\pm$0.02                            & \multicolumn{2}{r|}{1.01$\pm$0.02}                   & \multicolumn{2}{r}{1.04$\pm$0.01}         & \multicolumn{2}{r}{0.26$\pm$0.03}                      \\
YACHT                                        & 308                                    & 6                                      & 2.37$\pm$0.55            & 0.29$\pm$0.1                      & 0.59$\pm$0.11                            & \textbf{-0.02$\pm$0.07}                      & 0.59$\pm$0.13                               & 1.11$\pm$0.04                            & 1.22$\pm$0.18                            & \multicolumn{2}{r|}{0.79$\pm$0.11}                   & \multicolumn{2}{r}{1.03$\pm$0.03}         & \multicolumn{2}{r}{0.10$\pm$0.05}                      \\
NAVAL                                        & 11934                                  & 16                                     & \textbf{-7.25$\pm$0.08}  & -6.76$\pm$0.1                     & -7.21$\pm$0.06                           & -5.62$\pm$0.04                               & -4.11$\pm$0.00                              & -2.80$\pm$0.00                           & -2.80$\pm$0.00                           & \multicolumn{2}{r|}{-2.97$\pm$0.14}                  & \multicolumn{2}{r}{-7.13$\pm$0.02}        & \multicolumn{2}{l}{N/A$\pm$0.00}                       \\ \hdashline
Mean Rank                                    & \multicolumn{1}{c}{}                   & \multicolumn{1}{c}{}                   & \multicolumn{1}{c}{5.5}  & \multicolumn{1}{c}{\textbf{2.06}} & \multicolumn{1}{c}{2.22}                 & \multicolumn{1}{c}{3.33}                     & \multicolumn{1}{c}{4.94}                    & \multicolumn{1}{c}{7}                    & \multicolumn{1}{c}{6.11}                 & \multicolumn{2}{c|}{4.83}                            & \multicolumn{2}{c}{}                      & \multicolumn{2}{c}{}                                   \\ \hline
\end{tabular}
}
\caption{The table shows the average test NLL on several UCI regression datasets. We train on random $90\%$ of the data and predict on $10\%$. This is repeated 10 times and we report mean and standard deviation. The results for our competitors are taken from \citet{ma2021functional}. }
\label{results}
\label{tab:UCI}
\vspace{-0.75cm}
\end{table}

One can see that GWI-net obtains the best mean rank of all methods being the best model on 4/9 datasets and performing competitively on all datasets. Note that we exclude FBNN and exact Gaussian processes from the comparison because their computational complexity is often prohibitively large.

\shortparagraph{Classification and OOD Detection}\label{sec:exp_classification}
We demonstrate the ability of GWI to perform image classifications on Fashion MNIST \citep{xiao2017fashion} and CIFAR-10 \citep{krizhevsky2009learning}. We compare to FVI, mean-field variational inference (MVFI) \citep{blundell2015weight}, maximum a posteriori approximation (MAP), K-FAC Laplace-GNN \citep{martens2015optimizing} and its dampened version \citep{ritter2018scalable}. Implementation details are discussed in \ref{ap:Implement_Details_Class}.


We also assess the ability of our model to perform out-of-distribution detection using in-distribution (ID) / out of-distribution (OOD) pairs given as FashionMNIST/MNIST and CIFAR10/SVNH. Following the
setting of \citet{osawa2019practical,immer2021improving} we calculate the area under the curve (AUC) of a binary out-of-distribution classifier based on predictive entropies. Results are shown in \Cref{class}.

\begin{table}[htb!]
 \resizebox{\linewidth}{!}{%

\begin{tabular}{lrrrrrr}
\hline
               & \multicolumn{3}{c}{FMNIST}                                                                                                                 & \multicolumn{3}{c}{CIFAR 10}                                                                                                      \\ \cline{2-7} 
\textbf{Model} & \multicolumn{1}{c}{Accuracy}                 & \multicolumn{1}{c}{NLL}                      & \multicolumn{1}{c}{OOD-AUC}                  & \multicolumn{1}{c}{Accuracy}                 & \multicolumn{1}{c}{NLL}                      & \multicolumn{1}{c}{OOD-AUC}         \\ \hline
GWI-net        & \multicolumn{1}{l}{\textbf{93.25 $\pm$0.09}} & \multicolumn{1}{l}{\textbf{0.250 $\pm$0.00}} & \multicolumn{1}{l}{\textbf{0.959 $\pm$0.01}} & \multicolumn{1}{l}{\textbf{83.82 $\pm$0.00}} & \multicolumn{1}{l}{\textbf{0.553 $\pm$0.00}} & \multicolumn{1}{l}{0.618 $\pm$0.00} \\
FVI            & 91.60$\pm$0.14                               & 0.254$\pm$0.05                               & 0.956$\pm$0.06                               & 77.69 $\pm$0.64                              & 0.675$\pm$0.03                               & 0.883$\pm$0.04                      \\
MFVI           & 91.20$\pm$0.10                               & 0.343$\pm$0.01                               & 0.782$\pm$0.02                               & 76.40$\pm$0.52                               & 1.372$\pm$0.02                               & 0.589$\pm$0.01                      \\
MAP            & 91.39$\pm$0.11                               & 0.258$\pm$0.00                               & 0.864$\pm$0.00                               & 77.41$\pm$0.06                               & 0.690$\pm$0.00                               & 0.809$\pm$0.01                      \\
KFAC-LAPLACE   & 84.42$\pm$0.12                               & 0.942$\pm$0.01                               & 0.945$\pm$0.00                               & 72.49$\pm$0.20                               & 1.274$\pm$0.01                               & 0.548$\pm$0.01                      \\
RITTER et al. & 91.20$\pm$0.07                               & 0.265$\pm$0.00                               & 0.947$\pm$0.00                               & 77.38$\pm$0.06                               & 0.661$\pm$0.00                               & 0.796$\pm$0.00                      \\ \hline
\end{tabular}
}
\caption{We report average accuracy, NLL and OOD-AUC on test data for 10 different train/test splits. The results for FVI are obtained from \citet{ma2021functional} and for MAP, KFAC and Ritter et al. results are taken from \citet{immer2021improving} .}
\label{class}
\end{table}
\vspace{-0.5cm}

Our method performs best in all categories on the Fashion MNIST dataset achieving state-of-the-art results. On CIFAR10 we obtain the highest accuracy and best NLL by a significant margin and perform competitively in the OOD detection task.
\vspace{-0.3cm}

\section{Limitations}

In this section we discuss some of the shortcomings and difficulties which are related to our method.

The GVI-FS framework allows the specification of function space inference via infinite dimensional parameters such as mean and kernel functions. This great flexibility essentially allows the specification of mismatched prior and posterior parameters. We illustrate such a case in Appendix \ref{ap:model_misspecification}.

GWI-net relies on the SVGP kernel defined in \ref{eq:r_SVGP} for its posterior approximation. It therefore inherits numerical instabilities associated with the inversion of the kernel matrix. For the data sets discussed in this paper it was possible to overcome these issues by smart initialisation of the optimiser (cf. Appendix \ref{ap:Implementation}), but it may be an interesting research avenue to come up with a kernel that avoids these instabilities. 

Our method approximates the Wasserstein distance in function space via the spectrum of kernel matrices (cf. Appendix \ref{ap:WD_approximation}). These approximations require quick spectral decay of the composition of prior and variational covariance operator to be accurate and computationally tractable. The prior SE kernel combined with the variational SVGP kernel did have this property (cf. \ref{Ap:wasserstein_estimation error}) which allowed for cheap and accurate approximations. However, other parameterisations may result in less accurate estimation. A theoretical investigation of how the approximation quality relates to kernel properties is an interesting topic for further research.


The proposed framework models prior and variational distribution with a Gaussian measure on the space of square integrable functions. As a consequence the posterior distribution for the functional output is Gaussian as well. This means it is unimodal and concentrated around the posterior mean. Although this constrains the form of functional posterior significantly the authors would argue that the empirical success of GWI-net demonstrates that the approach is flexible to meaningfully quantify uncertainty.

\section{Conclusion}
\vspace{-0.3cm}

In this paper, we developed a framework for generalized variational inference in infinite-dimensional function spaces. We leveraged the function space perspective to develop a new inference approach combining Gaussian measures and Wasserstein distance with predictive performance of deep neural networks, yielding principled uncertainty quantification. The value of our method was demonstrated on several benchmark datasets.

\newpage

\section*{References}

\bibliographystyle{abbrvnat}
\bibliography{biblio}

\newpage

\section*{Checklist}

The checklist follows the references.  Please
read the checklist guidelines carefully for information on how to answer these
questions.  For each question, change the default \answerTODO{} to \answerYes{},
\answerNo{}, or \answerNA{}.  You are strongly encouraged to include a {\bf
justification to your answer}, either by referencing the appropriate section of
your paper or providing a brief inline description.  For example:
\begin{itemize}
  \item Did you include the license to the code and datasets? \answerYes{See Section~\ref{gen_inst}.}
  \item Did you include the license to the code and datasets? \answerNo{The code and the data are proprietary.}
  \item Did you include the license to the code and datasets? \answerNA{}
\end{itemize}
Please do not modify the questions and only use the provided macros for your
answers.  Note that the Checklist section does not count towards the page
limit.  In your paper, please delete this instructions block and only keep the
Checklist section heading above along with the questions/answers below.

\begin{enumerate}

\item For all authors...
\begin{enumerate}
  \item Do the main claims made in the abstract and introduction accurately reflect the paper's contributions and scope?
    \answerYes{} 
  \item Did you describe the limitations of your work?
    \answerYes{} See Appendix \ref{ap:model_misspecification}
  \item Did you discuss any potential negative societal impacts of your work?
    \answerNA{} Nothing to discuss
  \item Have you read the ethics review guidelines and ensured that your paper conforms to them?
    \answerYes{}
\end{enumerate}

\item If you are including theoretical results...
\begin{enumerate}
  \item Did you state the full set of assumptions of all theoretical results?
    \answerYes{}
        \item Did you include complete proofs of all theoretical results?
    \answerYes{} See Appendix A.1-A.6
\end{enumerate}

\item If you ran experiments...
\begin{enumerate}
  \item Did you include the code, data, and instructions needed to reproduce the main experimental results (either in the supplemental material or as a URL)?
    \answerYes{} See footnote in introduction
  \item Did you specify all the training details (e.g., data splits, hyperparameters, how they were chosen)?
    \answerYes{} See Appendix \ref{ap:Implementation} and \ref{ap:Implement_Details_Class} 
        \item Did you report error bars (e.g., with respect to the random seed after running experiments multiple times)?
    \answerYes{}
        \item Did you include the total amount of compute and the type of resources used (e.g., type of GPUs, internal cluster, or cloud provider)?
    \answerYes{} See Appendix \ref{computational resources}
\end{enumerate}

\item If you are using existing assets (e.g., code, data, models) or curating/releasing new assets...
\begin{enumerate}
  \item If your work uses existing assets, did you cite the creators?
    \answerNA{}
  \item Did you mention the license of the assets?
    \answerNA{}
  \item Did you include any new assets either in the supplemental material or as a URL?
    \answerNA{}
  \item Did you discuss whether and how consent was obtained from people whose data you're using/curating?
    \answerNA{}
  \item Did you discuss whether the data you are using/curating contains personally identifiable information or offensive content?
    \answerNA{}
\end{enumerate}

\item If you used crowdsourcing or conducted research with human subjects...
\begin{enumerate}
  \item Did you include the full text of instructions given to participants and screenshots, if applicable?
    \answerNA{}
  \item Did you describe any potential participant risks, with links to Institutional Review Board (IRB) approvals, if applicable?
    \answerNA{}
  \item Did you include the estimated hourly wage paid to participants and the total amount spent on participant compensation?
    \answerNA{}
\end{enumerate}

\end{enumerate}

\newpage

\appendix

\section{Appendix}

\subsection{Bayesian Inference as an Optimization Problem for an Infinite-Dimensional Prior Measure}\label{ap:Bayes_as_Opt}

Let $E$ be a (infinite dimensional) Polish space and $\mathcal{B}(E)$ the Borel $\sigma$-algebra on $E$. We denote the set of Borel probability measures on $\mathcal{B}(E)$ as $\mathcal{P}(E)$ and choose a fixed prior measure $P \in \mathcal{P}(E)$. The likelihood is described by a Markov kernel function $p:  \mathcal{Y} \times E \to [0,\infty)$ with
\begin{equation}
    (y,f) \mapsto p(y|f),
\end{equation}
where $\mathcal{Y} \subset \bbR^N$ is Borel measurable. The prior and the likelihood induce for any fixed $y \in \calY$ a posterior measure denoted as $\widehat{P} \in \mathcal{P}(E)$ \citep[Chapter 1.3]{ghosal2017fundamentals}. 

The next theorem shows that this posterior measure is the solution to a certain optimization problem.

\begin{theorem}[Bayes Posterior as optimization] The Bayesian posterior measure $\widehat{P}$ is given as
\begin{equation}\label{eq:Posterior_As_Optimum}
    \widehat{P} = \underset{Q \in \mathcal{P}(E)}{\text{argmin }} \left\{- \bbE_Q \big[ \log p(y|F) \big] + \bbD_{KL}(Q,P)\right\}
\end{equation}
for any fixed prior measure $P \in \mathcal{P}(E)$ and fixed $y \in \mathcal{Y}$ such that $f \in E \mapsto p(y|f) >0$.
\end{theorem}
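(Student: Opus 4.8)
The plan is to reduce the claim to the non-negativity of the KL divergence (Gibbs' inequality) by rewriting the objective as a KL divergence from $Q$ to the posterior plus a $Q$-independent constant. First I would invoke the measure-theoretic version of Bayes' rule \citep[Chapter 1.3]{ghosal2017fundamentals}: writing $Z:= \int_E p(y|f) \, dP(f)$ for the evidence, the hypotheses ensure $0 < Z < \infty$, and the posterior $\widehat{P}$ is absolutely continuous with respect to the prior $P$ with Radon-Nikodym derivative $\tfrac{d\widehat{P}}{dP}(f) = p(y|f)/Z$. Crucially, the assumption that $f \mapsto p(y|f) > 0$ on all of $E$ makes this density strictly positive, so that $P$ and $\widehat{P}$ are in fact \emph{mutually} absolutely continuous.

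Second, I would restrict attention to feasible $Q$. If $Q$ is not dominated by $P$, then $\bbD_{KL}(Q,P) = +\infty$, so the objective is $+\infty$ and such $Q$ cannot be optimal, since the finite value $-\log Z$ is attained at $\widehat{P}$. For any $Q$ with $Q \ll P$, mutual absolute continuity of $P$ and $\widehat{P}$ forces $Q \ll \widehat{P}$, and the chain rule for Radon-Nikodym derivatives gives $\tfrac{dQ}{dP} = \tfrac{dQ}{d\widehat{P}} \cdot \tfrac{d\widehat{P}}{dP}$ ($P$-a.e.). Taking logarithms, substituting $\log \tfrac{d\widehat{P}}{dP} = \log p(y|\cdot) - \log Z$, and integrating against $Q$ yields the key identity
\begin{equation}
    \bbD_{KL}(Q,P) = \bbD_{KL}(Q,\widehat{P}) + \bbE_Q\big[ \log p(y|F) \big] - \log Z .
\end{equation}

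Third, substituting this identity into the objective makes the two expected log-likelihood terms cancel, leaving $L(Q) = \bbD_{KL}(Q,\widehat{P}) - \log Z$. Since $\log Z$ does not depend on $Q$, minimizing $L$ over $\mathcal{P}(E)$ is equivalent to minimizing $Q \mapsto \bbD_{KL}(Q,\widehat{P})$. By Gibbs' inequality this is non-negative and vanishes if and only if $Q = \widehat{P}$, identifying $\widehat{P}$ as the unique minimizer.

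The main obstacle I anticipate is the measure-theoretic bookkeeping in infinite dimensions rather than the algebra, which is routine. Specifically, I must confirm that the evidence $Z$ is finite and strictly positive so that the posterior density is well-defined, justify the Radon-Nikodym chain rule on the Polish space $E$ (which applies precisely because the intermediate density $p(y|\cdot)/Z$ is strictly positive, guaranteed by the hypothesis $p(y|f) > 0$), and verify that the integrals are well-defined, adopting the convention that $L(Q) = +\infty$ whenever either the KL term or the expected log-likelihood term fails to be integrable, so that these degenerate cases are automatically excluded from the minimization.
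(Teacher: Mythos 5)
Your proposal is correct and follows essentially the same route as the paper's own proof: invoke the infinite-dimensional Bayes rule for $\tfrac{d\widehat{P}}{dP}$, establish mutual absolute continuity of $P$ and $\widehat{P}$ from the strict positivity of $p(y|\cdot)$, discard non-dominated $Q$ as yielding an infinite objective, and apply the Radon--Nikodym chain rule to rewrite $L(Q)$ as $\bbD_{KL}(Q,\widehat{P})$ plus a $Q$-independent constant, concluding by Gibbs' inequality. In fact your bookkeeping is cleaner than the paper's: the constant is $-\log p(y)$ as you derive, whereas the paper's displayed manipulation drops two logarithms and writes the constant as $p(y)$, an evident typo.
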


\begin{proof}
According to Bayes rule in infinite dimensions \citep[Chapter 1.3]{ghosal2017fundamentals} we know that $\widehat{P}$ is dominated by $P$ with Radon-Nikodym derivative given as
\begin{equation}
    \frac{d \widehat{P}}{d P}(f) = \frac{p(y|f)}{p(y)},
\end{equation}
for $f \in E$ where $p(y):= \int p(y|F=f) \, dP(f) $ is the marginal likelihood for $y$. The reverse is also true and $P$ is dominated by $\widehat{P}$. We prove this by contraposition and therefore assume that $P(A)>0$ for some $A \in \mathcal{B}(E)$. From Bayes rule we know that 
\begin{equation}
    \widehat{P}(A) = \int_A \frac{p(y|f)}{p(y)} \, dP(f) > 0
\end{equation}
as the integrand is positive by assumption and $P(A) >0$. This gives $\widehat{P}(A)>0$ and therefore that $P$ is dominated by $\widehat{P}$. In this case standard rules for Radon-Nikodym derivatives give that 
\begin{equation}
     \frac{d P}{d \widehat{P}}(f) = \frac{p(y)}{p(y|f)},
\end{equation}
for $f \in E$. Note that without loss of generality we can assume that the optimal $Q \in \mathcal{P}(E)$ is dominated by $P$ (and therefore also dominated by $\widehat{P}$) since otherwise \eqref{eq:Posterior_As_Optimum} is infinite by definition of the KL divergence. For such a $Q$ dominated by $P$ it holds that
\begin{align}
   L(Q)&:= - \bbE_Q \big[ \log p(y|F) \big] + \bbD_{KL}(Q,P) \\
       &= - \int \log p(y|f) \, dQ(f)  + \int \log \frac{dQ}{dP}(f) \, dQ(f) \\
       &= - \int \log p(y|f) \, dQ(f)  + \int \log \frac{dQ}{d\widehat{P}}(f) \, dQ(f) + \int \log \frac{d\widehat{P}}{dP}(f) \, dQ(f),
\end{align}
where the last line follows from the chain rule for Radon-Nikodym derivatives. We further have 
\begin{align}
    L(Q) &= - \int p(y|f) \, dQ(f) + \bbD_{KL}(Q, \widehat{P}) + \int \frac{p(y|f)}{p(y)} \, dQ(f) \qquad\text{(Bayes Rule)} \\
    &= \bbD_{KL}(Q, \widehat{P}) + p(y) \\
    &\ge p(y),
\end{align}
since $\bbD_{KL}(Q,P) \ge 0$, with equality if and only if $Q=\widehat{P}$. This proves the claim.
\end{proof}

\subsection{Pointwise Evaluation as Weak Limit}\label{ap:Weak_Limit}
To outline the problem briefly: If $F \sim \mathcal{N}(m,C)$ is a GRE with mean $m \in L^2(\calX,\rho,\bbR)$ and covariance operator $C$ as defined in \eqref{eq:cov_op} then it is in general unclear what the distribution of $F(x)$ would be for a fixed $x \in \calX$. The technical reason is that the pointwise evaluation $\pi_x: L^2(\calX,\rho, \bbR) \to \bbR$, i.e.
\begin{equation}
    \pi_x(f):=f(x) \label{ap:eq:ptw_ev}
\end{equation}
is not well-defined. An element $g$ of the space $L^2(\calX,\rho, \bbR)$ is an equivalence class and only identifiable up to a $\rho$-nullset. This means that the definition of $\pi_x$ in \eqref{ap:eq:ptw_ev} makes no sense whenever $\rho( \{x\}) = 0$ which is the case whenever $\rho$ has a pdf w.r.t. the Lebesgue measure.

However, we will remedy this situation by defining for a fixed $x \in \calX$
\begin{equation}
F(x):= \lim_{n \to \infty} \langle F, h_{n,x} \rangle_2
\end{equation}
where $h_{n,x} \in L^2(\calX,\rho,\bbR)$ is an appropriately chosen sequence and the limit is to be understood as convergence in distribution of the sequence of scalar random variables $ \langle F, h_{n,x} \rangle_2$. 
\begin{theorem}\label{thm:ptws_eval}
Let $F \sim \mathcal{N}(m,C)$ be a GRE in $\mathcal{L}^2(\calX,\rho,\bbR)$ with mean $m \in L^2(\calX,\rho,\bbR)$ and covariance operator $C$ as defined in \eqref{eq:cov_op}. Assume that $\rho$ is a probability measure on $\calX \subset \bbR^D$ and that $\rho$ is absolutely continuous with respect to the Lebesgue measure $\lambda$ on $\bbR^D$ with pdf $\rho'$.
Denote the support of the measure $\rho$ by $supp(\rho)$ and assume that $x$ is an arbitrary point in the interior of $supp(\rho)$ such that $m$, $k$ and $\rho'$ are continuous at $x$.

Let 
\begin{equation}
\eta(t) = \begin{cases}
\exp \Big(-\frac{1}{1-|t|^2} \Big)& \text{if } |t| < 1,\\
0                     & \text{if } |t|\geq 1.
\end{cases}    
\end{equation}
be the so called \textit{standard molifier} and note that $\eta$ is smooth with $\int \eta(t) \, dt = 1$. We further define the sequence  $h_{n,x}(t):= \eta \big(n(t-x)\big)/\rho'(t)$ for $n \in \bbN$, $t \in supp(\rho)$ and $h_{n,x}=0$ for $t \notin supp(\rho)$. Then
\begin{equation}
    \langle F , h_{n,x} \rangle_2 \overset{\mathcal{D}}{\longrightarrow} \mathcal{N}\big(m(x), k(x,x) \big)
\end{equation}
for $n \to \infty$ where $\overset{\mathcal{D}}{\longrightarrow}$ denotes convergence in distribution.

\end{theorem}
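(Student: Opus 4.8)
The plan is to reduce the statement about a sequence of $L^2(\calX,\rho,\bbR)$-valued objects to an elementary statement about scalar Gaussians. Since $F\sim\mathcal N(m,C)$ is a GRE, the law $\langle F,h\rangle_2\sim\mathcal N\big(\langle m,h\rangle_2,\langle Ch,h\rangle_2\big)$ recorded in the background section applies with $h=h_{n,x}$, provided $h_{n,x}\in L^2(\calX,\rho,\bbR)$. So the first step is to check this membership: since $\eta(n(\cdot-x))$ is supported on the ball $\{|t-x|\le 1/n\}$, which for large $n$ lies in the interior of $\mathrm{supp}(\rho)$ where $\rho'$ is continuous and (we must invoke) strictly positive, one has $\rho'\ge \rho'(x)/2>0$ there, and then $\|h_{n,x}\|_2^2=\int \eta(n(t-x))^2/\rho'(t)\,dt<\infty$. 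With the Gaussian law in hand, $\langle F,h_{n,x}\rangle_2\sim\mathcal N(\mu_n,\sigma_n^2)$ with $\mu_n:=\langle m,h_{n,x}\rangle_2$ and $\sigma_n^2:=\langle C h_{n,x},h_{n,x}\rangle_2$, and convergence in distribution of univariate Gaussians is equivalent to convergence of the parameters: if $\mu_n\to m(x)$ and $\sigma_n^2\to k(x,x)$, then the characteristic functions $\exp(i\mu_n s-\tfrac12\sigma_n^2 s^2)$ converge pointwise to $\exp(i m(x)s-\tfrac12 k(x,x)s^2)$ and Lévy's continuity theorem yields the claim (this also covers the degenerate case $k(x,x)=0$).

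The second step computes the mean. The key observation is that the factor $1/\rho'$ in the definition of $h_{n,x}$ cancels against the Radon–Nikodym density $d\rho=\rho'\,d\lambda$, so that $\mu_n=\int m(t)\,\eta(n(t-x))\,d\lambda(t)$. After rescaling the mollifier so that it integrates to one and concentrates at $x$ (the usual approximate-identity family $u\mapsto n^{D}\eta(nu)$), this is exactly a mollification of $m$ evaluated at $x$. By continuity of $m$ at $x$ together with the mass-one and localization properties of the mollifier, $\mu_n\to m(x)$.

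The third step treats the variance analogously but with a double mollification. Unfolding $C$ via \eqref{eq:cov_op} and cancelling $\rho'$ twice gives $\sigma_n^2=\int\!\int k(t,s)\,\eta(n(t-x))\,\eta(n(s-x))\,d\lambda(s)\,d\lambda(t)$, i.e. a mollification of the kernel $k$ in both arguments simultaneously, centred at the point $(x,x)$. Using continuity of $k$ at $(x,x)$ and again the approximate-identity property (now in $\bbR^{D}\times\bbR^{D}$), this converges to $k(x,x)$. Combining the two limits with the first step completes the argument.

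The main obstacle is not any single big theorem but the careful localization at a \emph{single} point: we only assume $m$, $k$, $\rho'$ continuous at $x$ (and $(x,x)$), not globally, so the mollifier estimates must be run with the $\varepsilon$–$\delta$ continuity at $x$ rather than quoting a global approximate-identity result, and the support of $\eta(n(\cdot-x))$ must be forced into the region where $\rho'$ is both continuous and bounded away from zero — which is precisely why $x$ is required to lie in the interior of $\mathrm{supp}(\rho)$ and why positivity of $\rho'$ near $x$ is needed for $h_{n,x}\in L^2(\rho)$. The double-mollification estimate for $\sigma_n^2$ is the most technical piece, since one must control the integrand on the product ball $\{|t-x|\le1/n\}\times\{|s-x|\le1/n\}$ and invoke joint continuity of $k$ at the diagonal point $(x,x)$.
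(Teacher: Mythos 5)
Your proposal is correct and follows essentially the same route as the paper's proof: verify $h_{n,x}\in L^2(\calX,\rho,\bbR)$, invoke the GRE definition to get $\langle F,h_{n,x}\rangle_2\sim\mathcal N(\mu_n,\sigma_n^2)$, show $\mu_n\to m(x)$ and $\sigma_n^2\to k(x,x)$ by the cancellation of $\rho'$ and mollifier localization at the continuity point, and conclude from parameter convergence (you via characteristic functions and L\'evy continuity, the paper via Slutsky's theorem). In fact your write-up is slightly more careful than the paper's on two points it glosses over: the approximate-identity normalization $n^{D}\eta(n\,\cdot)$ needed for $\int\eta_n=1$, and the strict positivity of $\rho'$ near $x$ required for $\|h_{n,x}\|_2<\infty$.
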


\begin{proof}
Note that $supp(h_{n,x}) = B_{1/n}(x):= \{ t \in \bbR^D \, : \, |t-x| \le \frac{1}{n} \}$ and  $B_{1/n}(x) \subset supp(\rho)$ for large enough $n \in \bbN$ since $x$ is from the interior of $supp(\rho)$. This means that $h_{n,x} \in L^2(\calX,\rho,\bbR)$ for large enough $n$ as 
\begin{align}
    \int h_{n,x}(t) \, d \rho(t) &= \int_{supp(\rho)} \left(\frac{\eta \big(n(t-x)\big)}{\rho'(t)} \right)^2 \rho'(t) \, d\lambda(t) \\
    &= \int_{supp(\rho)} \frac{\eta\Big(n(t-x)\Big) }{\rho'(t)} \, dt \\
    &= \int_{B_{1/n}(x)} \frac{\eta\Big(n(t-x)\Big) }{\rho'(t)} \, dt.
\end{align}
The last expression is finite for large enough $n$ because the integrand is continuous at $x$.
According to the definition of of GREs we therefore conclude that 
\begin{equation}
    \langle F, h_{n,x} \rangle_2 \sim \mathcal{N}\big( \langle m, h_{n,x} \rangle_2, \langle C h_{n,x}, h_{n,x} \rangle_2 \big)
\end{equation}
for large enough $n \in \bbN$.

The next statement we show is that $m_n(x):=  \langle m, h_{n,x} \rangle_2 \to m(x) $ for $n \to \infty$. To this end notice that
\begin{align}
    | m_n(x) - m(x) | &=  |\int_{B_{1/n}(x)} h_{n,x}(t) \big(m(x)- m(t) \big)\, d\rho(t)   | \\
    & \le \int_{B_{1/n}(x)} \eta \Big(n(t-x)\Big) |m(x)-m(t)| \, dt. \label{eq:ptws_convergence}
\end{align}
Let now $\epsilon > 0$ be arbitrary. For $n$ large enough we $|m(x)-m(t)|\le \epsilon$ for all $t \in B_{1/n}(x)$ due to the continuity of $m$ in $x$. This immediately implies 
\begin{align}
    \int_{B_{1/n}(x)} \eta \Big(n(t-x)\Big) |m(x)-m(t)| \, dt \le \epsilon \int_{B_{1/n}(x)} \eta \Big(n(t-x)\Big) \, dt = \epsilon,
\end{align}
for large enough $n$ which shows the convergence of $m_n(x)$ to $m(x)$.

A similar argument shows that $k_n(x,x):= \langle C h_{n,x}, h_{n,x} \rangle_2 \to k(x,x)$ for $n \to \infty$. 

We therefore conclude that 
\begin{align}
    \langle F, h_{n,x} \rangle_2 &= \langle F, h_{n,x} \rangle_2 - m_n(x)  + m_n(x) \\
    &= \sqrt{k_n(x,x)} \underbrace{  \frac{\langle F, h_{n,x} \rangle_2 - m_n(x)}{\sqrt{k_n(x,x)}} }_{ \sim \mathcal{N}(0,1)} + m_n(x) \\
    & \overset{\mathcal{D}}{\longrightarrow} \mathcal{N}\big(m(x),k(x,x) \big)
\end{align}
for $n \to \infty$ due to Slutsky's theorem.
\end{proof}

According to Theorem \ref{thm:ptws_eval} we can simply define $F(x) \sim \mathcal{N}(m(x),k(x,x))$ for all $x$ in the interior of the support of $\rho$ if $m$, $k$ and $\rho'$ are continuous at $x$. These are mild assumptions and we can typically assume that they are satisfied in practice.

\subsection{The Wasserstein Metric for Probability Measures}\label{ap:WD}

Let $E$ be a Polish space. For $p\ge1$, let $P_p(E)$ denote the collection of all probability measures $\mu$ on $E$ with finite $p^{\text{th}}$ moment, that is, there exists some $x_0$ in $M$ such that:
\begin{equation}
 \int_M d(x, x_0)^{p} \, \mathrm{d} \mu (x) < \infty.   
\end{equation}

The $p^\text{th}$ Wasserstein distance between two probability measures $\mu$ and $\nu$ in $P_p(E)$ is defined as
\begin{equation}
W_p (\mu, \nu):=\left( \inf_{\gamma \in \Gamma (\mu, \nu)} \int_{E \times E} d(x, y)^p \, \mathrm{d} \gamma (x, y) \right)^{1/p},    
\end{equation}
where $\Gamma(\mu,\nu)$ denotes the collection of all measures on $E \times E$ with marginals $\mu$ and $\nu$ on the first and second arguments respectively. 

More details about the Wasserstein distance can be found in Chapter 7 of \citet{ambrosio2005gradient}.

\subsection{A Tractable Approximation of the Wasserstein Metric}\label{ap:WD_approximation}

Recall that the Wasserstein metric for the two Gaussian measures $P=\mathcal{N}(m_P,C_P)$ and $Q=\mathcal{N}(m_Q,C_Q)$ on the Hilbert space $H= L^2(\calX,\rho,\bbR)$ is given as 
\begin{equation}\label{eq:ap:WD_Gaussians}
    W_2^2(P,Q) = \| m_P - m_Q \|_2^2 + tr(C_P) + tr(C_Q) - 2 \cdot tr \Big[ \big(C_P^{1/2} C_Q^{} C_P^{1/2} \big)^{1/2} \Big].
\end{equation}
Further the operators $C_P$ and $C_Q$ are defined through trace-class kernels $k$ and $r$ as described in Section \ref{subsec:GVI_FSI}. We will now discuss how to approximate each term in \eqref{eq:ap:WD_Gaussians}. 

First, note that 
\begin{equation}
     \| m_P - m_Q \|_2^2 = \int \big(m_P(x) - m_Q(x)\big)^2 \, d \rho(x) \\
     \approx \frac{1}{N} \sum_{n=1}^N  \big(m_P(x_n) - m_Q(x_n)\big)^2,
\end{equation}
which follows by replacing the true input distribution with the empirical data distribution.
Second, note that under very general conditions on $k$ and $\rho$ it holds that \citep{brislawn1991traceable} 
\begin{equation}
    tr(C_P) =  \int k(x,x) \, d\rho(x)
\end{equation}
and similarly for $C_Q$. Again by replacing $\rho$ with the empirical data distribution we obtain natural estimators:
\begin{align}
     &tr(C_P) \approx \frac{1}{N} \sum_{n=1}^N k(x_n,x_n), \\
     &tr(C_Q) \approx \frac{1}{N} \sum_{n=1}^N r(x_n,x_n).
\end{align}
Denote by $\lambda_n(C)$ the $n$-th eigenvalue of a positive, self-adjoint operator $C$. By definition of the trace and the square root of an operator we have 
\begin{align}
    tr \Big[ \big(C_P^{1/2} C_Q^{} C_P^{1/2} \big)^{1/2} \Big] 
    &= \sum_{n=1}^\infty \sqrt{ \lambda_n \Big( C_P^{1/2} C_Q^{} C_P^{1/2}  \Big) } \\
    &= \sum_{n=1}^\infty \sqrt{ \lambda_n \Big( C_Q C_P  \Big)},\label{eq:spec_CPQ}
\end{align}
where the second line follows from the fact that the operator $C_Q C_P$ has the same eigenvalues as $C_P^{1/2} C_Q^{} C_P^{1/2}$ \citep[Proposition 1]{hladnik1988spectrum}. The operator $C_Q C_P$ is given as
\begin{align}
    C_Q C_P g(x) &= \int r(x,x') (C_Pf)(x') \, d \rho(x') \\
                 &= \int r(x,x') \big( \int k(x',t) f(t) d \rho(t) \big) \, d \rho(x') \\
                 &= \int \int r(x,x') k(x',t) f(t) \, d\rho(x') d \rho(t) \\
                 &= \int ( r*k)(x,t) f(t) \, d \rho(t), 
\end{align}
where we define 
\begin{equation}
    (r*k)(x,t) :=   \int r(x,x') k(x',t) \, d\rho(x') 
\end{equation}
for all $x,t \in \calX$. This means that $C_{Q} C_P$ is also an integral operator with (non-symmetric) kernel $r*k$. We again replace $\rho$ with $\widehat{\rho}$ to obtain
\begin{equation}
    \widehat{ (r*k)}(x,t) = \frac{1}{N} \sum_{n=1}^N  r(x,x_n) k(x_n, t).
\end{equation}
The spectrum of $C_Q C_P$ can now be approximated by the spectrum of the  matrix $ \frac{1}{N} \widehat{(r*k)}(X,X)$ \citep[cf. Chapter 4.3.2]{rasmussen2003gaussian} or $ \frac{1}{N_S} \widehat{  (r*k)}(X_S,X_S)$ where $X_S$ is a subsample of the data points $X$ of size $N_S <N$. If we plug this approximation into \eqref{eq:spec_CPQ} we obtain 
\begin{align}
    tr \Big[ \big(C_P^{1/2} C_Q^{} C_P^{1/2} \big)^{1/2} \Big] &\approx \sum_{m=1}^{N_S} \sqrt{\lambda_m\big( \frac{1}{N_S} \widehat{  (r*k)}(X_S,X_S) \big)} \\
    &= \frac{1}{\sqrt{N_S}} \sum_{m=1}^{N_S} \sqrt{ \lambda_m \Big( \frac{1}{N} r(X_S,X)k(X,X_S) \Big)},
\end{align}
which is the last expression that we had to approximate.

Note that since $C_Q C_P$ has the same spectrum as the self-adjoint, positive, trace-class operator $C_P^{1/2} C_Q^{} C_P^{1/2}$ we know that its eigenvalues are real, positive and converge to zero.

\subsection{Generalized Loss for Regression in Batch Mode}\label{ap:Batch_Mode}

The batch version of the generalized loss is given as:
\begin{align}
    \widehat {\mathcal{L}} &= \frac{N}{2} \log( 2 \pi \sigma^2) + \frac{N}{{N_B}} \sum_{b=1}^{{N_B}} \frac{\big( y_{n_b} - m_Q(x_{n_b}) \big)^2 +r(x_{n_b},x_{n_b})}{2 \sigma^2} + \frac{1}{{N_B}} \sum_{b=1}^{{N_B}} \big( m_P(x_{n_b})-m_Q(x_{n_b}) \big)^2 \\
&+ \frac{1}{{N_B}} \sum_{b=1}^{{N_B}} k(x_{n_b},x_{n_b}) + \frac{1}{{N_B}} \sum_{b=1}^{{N_B}} r(x_{n_b},x_{n_b}) - \frac{2}{\sqrt{N_B N_S}} \sum_{s=1}^{N_S} \sqrt{\lambda_s\big(  r(X_S,X_B)k(X_B,X_S) \big)},
\end{align}
$N_B \in \bbN$ is the batch-size. The indices $n_1,\hdots,n_{N_B}$ are the batch-indices and $X_B$ is the batch matrix.

\subsection{GWI for (Multiclass) Classification}\label{ap:Classification}
Let $\{(x_n,y_n)\}_{n=1}^N \subset \calX \times \calY$ be data with $\calX \subset \bbR^D$ and $\calY=\{1,\hdots,J\}$, where $J \in \bbN$ represents $J\ge2$ distinct classes. 

\shortparagraph{Model} We use the same likelihood for $y:=(y_1,\hdots,y_N)$ as described in Chapter 4 of \citet{matthews2017scalable} which is:
\begin{equation}
    p(y|f_1,\hdots,f_J) = \prod_{n=1}^N p(y_n|f_1,\hdots,f_J)
\end{equation}
with 
\begin{equation}
    p(y_n|f_1,\hdots,f_J) := h_{y_n}^{\epsilon}\big(f_1(x_n),\hdots,f_J(x_n)\big),
\end{equation}
for $ y_n \in \{1,\hdots,J\}$. The function $h_\ell^{\epsilon}$ is defined as 
\begin{equation}
    h_\ell^{\epsilon}(t_1,\hdots,t_J) \begin{cases}
1- \epsilon ~&\text{ if } \ell = \underset{j=1,\hdots,J}{\text{argmax}}\{t_j\}, \\
\frac{\epsilon}{J-1} ~&\text{ if } \text{otherwise}.
\end{cases}
\end{equation}
for $\ell=1,\hdots,J$ for $\epsilon >0$. We chose $\epsilon=1\%$ in our implementation.

We assume that $F_1,\hdots F_J$ are independent GREs on ${L}^2(\calX,\rho,\bbR)$ with prior means $m_{P,j}$ and prior covariance operators $C_{P,j}$, $j=1,\hdots,J$.

The variational measures for $F_1,\hdots,F_J$ are assumed to be independent and given as $Q_j = \mathcal{N}\big(m_{Q,j}, C_{Q,j} \big)$ for $j=1,\hdots,J$. We further write $\bbQ\Big(  \big(F_1(x), \hdots, F_J(x) \big) \in A \Big)$, $A \subset \bbR^J$ for the variational (posterior) approximation of the probability of the event $ \{ \big(F_1(x), \hdots, F_J(x) \big) \in A \}$.

This leads to the following expected log-likelihood
\begin{align}
    &\bbE_{\bbQ} \big[ \log p(y|F_1,\hdots,F_J) \big]  \\
    &= \sum_{n=1}^N \bbE_{\bbQ} \big[ \log p(y_n|F_1,\hdots,F_J) \big] \\
    &= \sum_{n=1}^N \log(1-\epsilon) \bbQ\big( \underset{j=1,\hdots,J}{\text{argmax}}\{F_j(x_n)\} = y_n \big) + \log(\frac{\epsilon}{J-1})  \bbQ\big( \underset{j=1,\hdots,J}{\text{argmax}}\{F_j(x_n)\} \neq y_n \big) \\
    &\approx\sum_{n=1}^N \log(1-\epsilon) S(x_n,y_n) + \log( \frac{\epsilon}{J-1}) \big(1-S(x_n,y_n) \big),\label{eq:multiclass_ell}
\end{align}
with 
\begin{equation}
S(x,j):= \frac{1}{\sqrt{\pi}} \sum_{i=1}^I w_i \prod_{l \neq j} \phi \Big( \frac{\sqrt{2 r_j(x,x)} \xi_i + m_{Q,j}(x) - m_{Q,l}(x) }{\sqrt{r_l(x,x)}} \Big)
\end{equation}
for any $x \in \calX$, $j=1,\hdots,J$ where $(w_i,\xi_i)_{i=1}^I$ are the weights and roots of the Hermite polynomial of order $I \in \bbN$. This is the same Gauss-Hermite approximation as described in Chapter 4 of \citet{matthews2017scalable}.

The final objective for multiclass classification is given as
\begin{equation}
    \mathcal L = - \bbE_Q \big[ \log p(y|F_1,\hdots,F_J) \big] + \sum_{j=1}^J W_2^2(P_j, Q_j),
\end{equation}
where the expected log-likelihood is approximated by \eqref{eq:multiclass_ell} and each Wasserstein distance $W_2^2(P_j,Q_j)$ can be estimated as in \eqref{eq:WD_approx1}-\eqref{eq:WD_approx2}.

\shortparagraph{Prediction} The probability that an unseen point $x^* \in \calX$ belongs to class $j \in \{1,\hdots,J\}$ is given as
\begin{equation}
    \bbQ( Y^* = j) = (1-\epsilon) S(x^*,j) + \frac{\epsilon}{J-1} \big( 1- S(x^*,j) \big)
\end{equation}
for any $x^* \in \calX$. We predict the class label as maximiser of this probability. If we apply tempering, we simply replace every $r_j(x,x)$ with $T \cdot r_j(x,x)$ for $j=1,\hdots,J$ in the definition of $S(x,j)$.

\shortparagraph{Negative Log Likelihood} The variational approximation to the negative log-likelihood is 
\begin{equation}
    NLL = - \log \Big[ (1-\epsilon) S(x^*,y^*) + \frac{\epsilon}{J-1} \big( 1- S(x^*,y^*) \big) \Big]
\end{equation}
for any point $x^* \in \calX$ for which we know that the class label is $y^* \in \{1,\hdots,J\}$.

\subsection{Implementation Details: Regression}\label{ap:Implementation}

The Regression model is given as $F \sim \mathcal{N}(0,C)$ and 
\begin{equation}
    Y_n = F(x_n) + \epsilon_n
\end{equation}
with $\epsilon_n \sim \mathcal{N}(0, \sigma^2)$, $n=1,\hdots,N$. The covariance operator $C_P$ depends on the choice of a kernel $k$, i.e. $C_P=C_{P,k}$ for which we use the ARD kernel $k$ given as
\begin{equation}
 k(x,x') = \sigma_f^2 \exp \Big( - \frac{1}{2} \sum_{d=1}^D  \frac{(x_d - x'_d)^2}{\alpha_d^2}  \Big)  \end{equation}
for $x,x' \in \bbR^D$. We refer to $\sigma_f > 0$ as \textit{kernel scaling factor}, to $\alpha_d >0$ as \textit{length-scale} for dimension $d$ and to $\sigma>0$ as \textit{observation noise}.

The data is first randomly split into three categories: training set $80\%$, validation set $10\%$ and test set $10 \%$. The observations $Y$ are then standardised by subtracting the empirical mean (of the training data) and dividing by the empirical standard deviation (of the training data). The inputs data $X$ is left unaltered.

\shortparagraph{The number of inducing points} The number of inducing points $M$ is treated as a hyperparameter, this means we train the model for each $M \in \{0.5 \sqrt{N},\sqrt{N}, 1.5 \sqrt{N}, 2\sqrt{N} \}$ and choose the best model. For GWI: SVGP we use $M \in \{1 \sqrt{N}, 2\sqrt{N}, \hdots 5 \sqrt{N} \}$.

\shortparagraph{The choice of inducing points} The input points $Z_1,\hdots,Z_M$ in \eqref{eq:r_SVGP} are sampled independently from the training data $X$ and then fixed for GWI-net. For GWI: SVGP they are only initialised this way and then learned by maximising the generalized loss.

\shortparagraph{Prior hyperparameters} The prior hyperparameters $\sigma_f$, $\alpha:=(\alpha_1,\hdots,\alpha_D) $ and $\sigma$ are chosen by maximising the marginal log-likelihood for the data $X=Z$ and the corresponding observations, which we denote $Y_Z$. Note that the marginal log-likelihood is tractable and given as 
    \begin{equation}
        \log p(y_Z) = -\frac{1}{2} \log \Big( \det \big( k(Z,Z) + \sigma^2 I_M \big) \Big) - \frac{1}{2} {y_Z}^T \big( k(Z,Z) + \sigma^2 I_M \big)^{-1} {y_Z}.
    \end{equation}
and can therefore be evaluated in $\mathcal{O}(M^3)=\mathcal{O}(N \sqrt{N})$.
\shortparagraph{Variational mean} For GWI-net we use a neural network with $L=2$ hidden layers, width $D_1=D_2=10$ and tanh as activation function. This follows the set-up of \citet{ma2021functional}.

\shortparagraph{Variational kernel} The kernel $r$ which is chosen as described in \eqref{eq:r_SVGP} and therefore depends on the covariance matrix $\Sigma \in \bbR^{M \times M}$ and the $M \in \bbN$ inducing points $Z=(Z_1,\hdots,Z_M) \in \bbR^{D \times M}$. We parametrise $\Sigma$ as $\Sigma = L L^T$ with initialisation
    \begin{equation}
        L = \text{Chol} \Big(  \big(k(Z,Z) + \frac{1}{\sigma^2} k(Z,X) k(X,Z) \big)^{-1}  \Big),
    \end{equation}
    where $k(Z,X) k(X,Z)$ is approximated by batch-sizing as $ \frac{N}{N_B} k(Z,X_B) k(X_B,Z)$. This corresponds to an approximation of the optimal choice for $\Sigma$ in SVGP \citep{titsias2009variational}.
    
\shortparagraph{Parameters in the generalized loss} The generalized loss in Appendix \ref{ap:Batch_Mode} depends further on $N_S$, $N_B$ and $X_S$. The batch-size $N_B$ is chosen to be $N_B=1000$ for $N>1000$. For $N<1000$ we use the full training data. The comparison points $X_S$ are sampled independently from the training data $X$ in each iteration. We train here for 1000 epochs on the regression task and 100 epochs on the classification task following \cite{ma2021functional}.

\shortparagraph{Tempering the predictive posterior}

\citet{wenzel2020good} observe that the performance of many Bayesian neural networks can be improved by \textit{tempering} the predictive posterior. Tempering refers to a shrinking of the predictive posterior variance by a factor of $\alpha_T \in [0,1]$. This effect has also been observed for Gaussian processes in \citet{adlam2020cold} where it can be interpreted as elevating problems that occur from prior misspecification. The prior hyperparameters for the ARD kernel $k$ in \eqref{eq:ARD_kernel} are selected by maximising the marginal log-likelihood on a subset of the training data. This procedure may lead to prior misspecification, which is why we decided to temper the predictive posterior, which means that we use the predictive distribution 
\begin{equation}\label{eq:predictive_distr}
    Y^*|Y \sim \mathcal{N}\Big( m_Q(x^*), \alpha_T \big( r(x^*,x^*) + \sigma^2 \big) \Big)
\end{equation}
for an unseen data point $x^* \in \calX$. The (tempered) NLL for each data point is given as
\begin{align}
    \text{NLL} &:= - \log p_{\alpha_T}(y^*|y) \\
    &= \frac{1}{2} \log \Big(  \alpha_T \cdot (r(x^*,x^*) + \sigma^2) \Big) + \frac{1}{2} \frac{(y-y^*)^2}{ \alpha_T \cdot (r(x^*,x^*) + \sigma^2)} + \frac{1}{2} \log( 2 \pi). \label{ap_eq: NLL}
\end{align}
The tempering factor $\alpha_T$ is chosen as minimiser of the average NLL on the validation set. The final predictions on the test set are made using this optimal $\alpha_T$ and \eqref{eq:predictive_distr}. Note however that for the NLL numbers reported in Table \ref{tab:UCI} we add $\log(\widehat{\sigma}_{train})$ to \eqref{ap_eq: NLL} where $\widehat{\sigma}_{train}$ is the empirical standard deviation of the training data. This is done for fair comparison as it is how the NLL is calculated in \cite{ma2021functional}.

\subsection{Implementation Details: Classification}\label{ap:Implement_Details_Class}

As described in section \eqref{ap:Classification} we use the prior mean functions $m_{P,j}$ and kernels $k_{j}$ for $j=1,\hdots,J$. For our experiments we chose $m_{P,j}=0$ for $j=1,\hdots,J$ and $k:=k_{1}=\hdots,k_J$ where $k$ is the ARD kernel in \eqref{eq:ARD_kernel}.

We use a multi-output neural network for the variational means $m_{Q,j}$ and an SVGP kernel for each $r_{j}$ , $j=1,\hdots,J$.

\shortparagraph{The number of inducing points} The number of inducing points $M$ is treated as a hyperparameter, this means we train the model for each $M \in \{0.5 \sqrt{N},0.75\sqrt{N},\sqrt{N}\}$ and choose the best model.

\shortparagraph{The choice of inducing points} The input points $Z_1,\hdots,Z_M$ in \eqref{eq:r_SVGP} are sampled independently from the training data $X$ and then fixed for GWI-net. 

\shortparagraph{Prior hyperparameters} The prior hyperparameters are initialised as described in \ref{ap:Implementation}, thus maximising the marginal likelihood of a \emph{regression} model, since the marginal likelihood of our classification model is intractable.

\shortparagraph{Variational mean} We use the same CNN architecture as described in \citet{immer2021improving,schneider2019deepobs} for all models.

\shortparagraph{Variational kernel} Each variational kernel $r_j$ uses the same inducing points $Z$ but gets an individual matrix $\Sigma^{j} \in \bbR^{M\times M}$ for $j=1,\hdots,J$. They are all initialised as described in \ref{ap:Implementation}.

\shortparagraph{Parameters in the generalized loss} The generalized loss in Appendix \ref{ap:Batch_Mode} depends on $N_S$, $N_B$ and $X_S$. The batch-size $N_B$ is chosen to be $N_B=1000$ for $N>1000$. For $N<1000$ we use the full training data. The comparison points $X_S$ are sampled independently from the training data $X$ in each iteration. We train 100 epochs on the classification task following \cite{ma2021functional}.

\shortparagraph{Tempering the predictive posterior} For the same reasons as outlined in Appendix \ref{ap:Implementation} we temper the predictive posterior. Recall that the NLL for classification is given as
\begin{equation}
    NLL = - \log \Big[ (1-\epsilon) S(x^*,y^*) + \frac{\epsilon}{J-1} \big( 1- S(x^*,y^*) \big) \Big]
\end{equation}
for any point $x^* \in \calX$ for which we know that the class label is $y^* \in \{1,\hdots,J\}$. We use a tempering factor $\alpha_j>0$ for each variational measure $Q_j \sim \mathcal{N}(m_{Q,j}, \alpha_j r_j)$, $j=1,\hdots,J$. We train the model with $\alpha_j=1$ for all $j=1,\hdots,J$ and select the tempering factors afterwards as minimiser of the average NLL on the validation set.

\subsection{Illustrative Example for Two Dimensional Inputs}\label{ap:Illustrative_Example}

In \citet{foong2020expressiveness} it is observed that several BNN posterior approximation techniques struggle with the quantification of in-between uncertainty. The red points mark where observations were made and it is clear that mean-field variational inference (MVFI) \citep{hinton1993keeping} and Monte Carlo Dropout (MCDO) \citep{gal2016dropout} exhibit unjustifiably high posterior certainty in the area where no observations are made. This is a pathology of the approximation technique as the true Bayesian posterior which is approximated to very high precision by Hamiltonian Monte Carlo (HMC) \citep{neal2012bayesian} or the infinite-width GP limit \citep{matthews2018gaussian} do not display such behaviour.

In \Cref{fig:2D_dataset} our method GWI-net is displayed next to the methods described in \citet{foong2020expressiveness}. As one can observe our model is keenly aware of its limited ability to predict points  in-between the two clusters of observed data points.

\begin{figure}[htb!]
    \centering
        \begin{subfigure}[b]{0.2\columnwidth}
        \includegraphics[width=\textwidth]{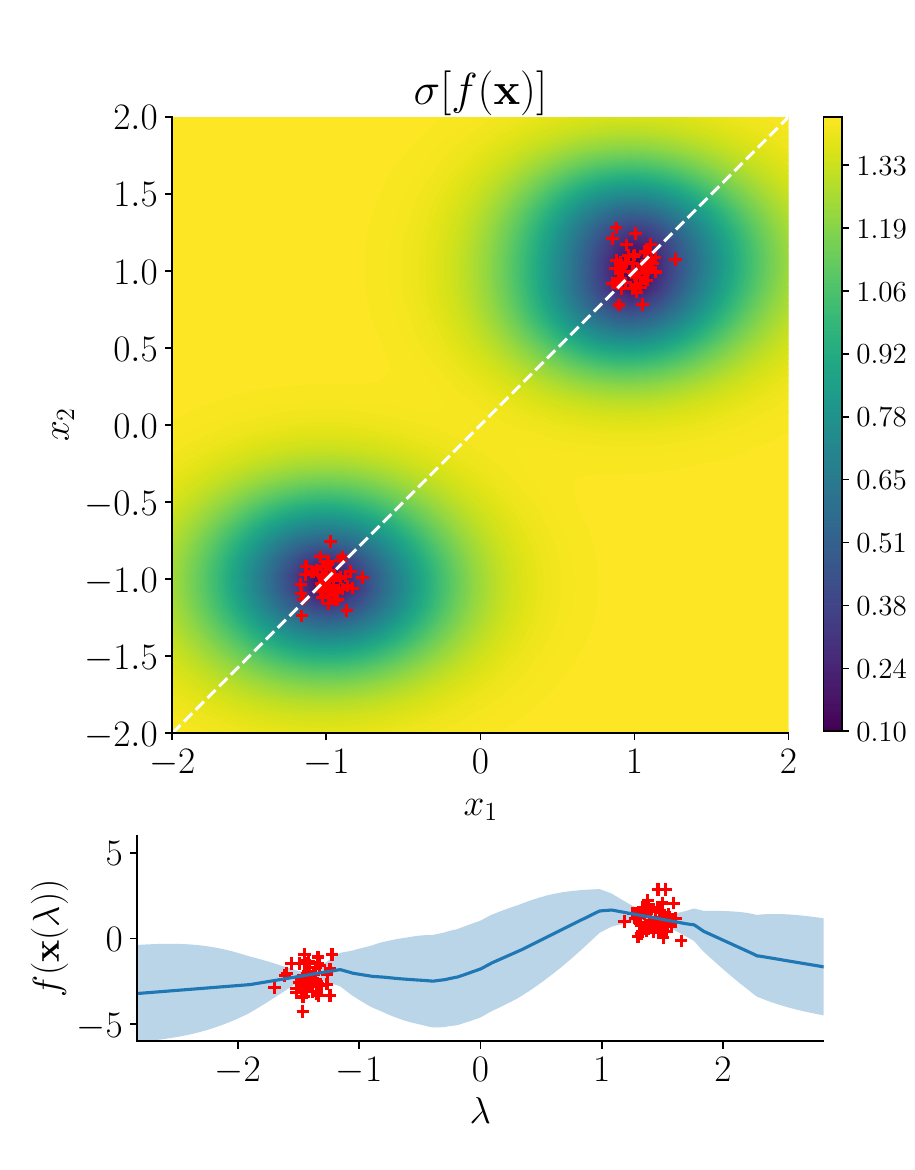}
        \caption{GWI}
    \end{subfigure}%
    \begin{subfigure}[b]{0.2\columnwidth}
        \includegraphics[width=\textwidth]{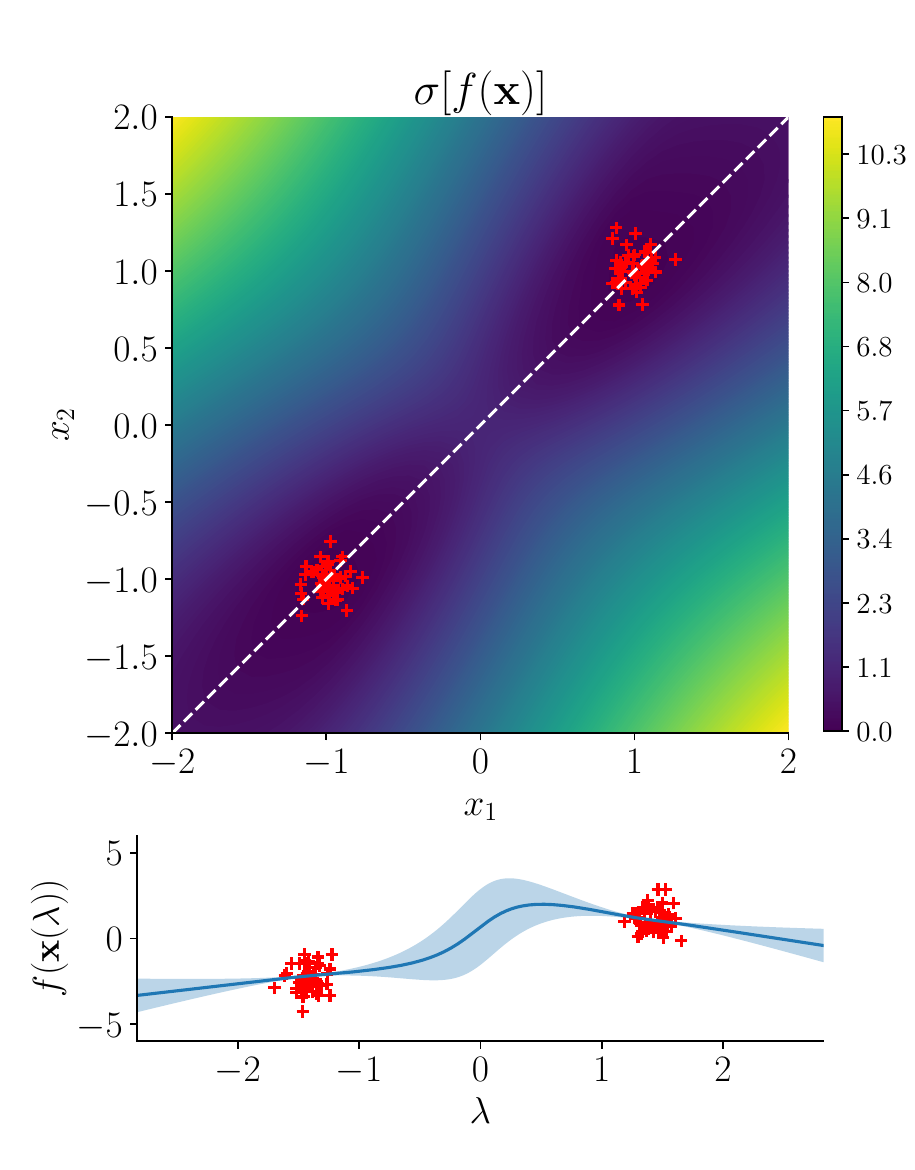}
        \caption{Inf-width limit GP}
    \end{subfigure}%
    \begin{subfigure}[b]{0.2\columnwidth}
        \includegraphics[width=\textwidth]{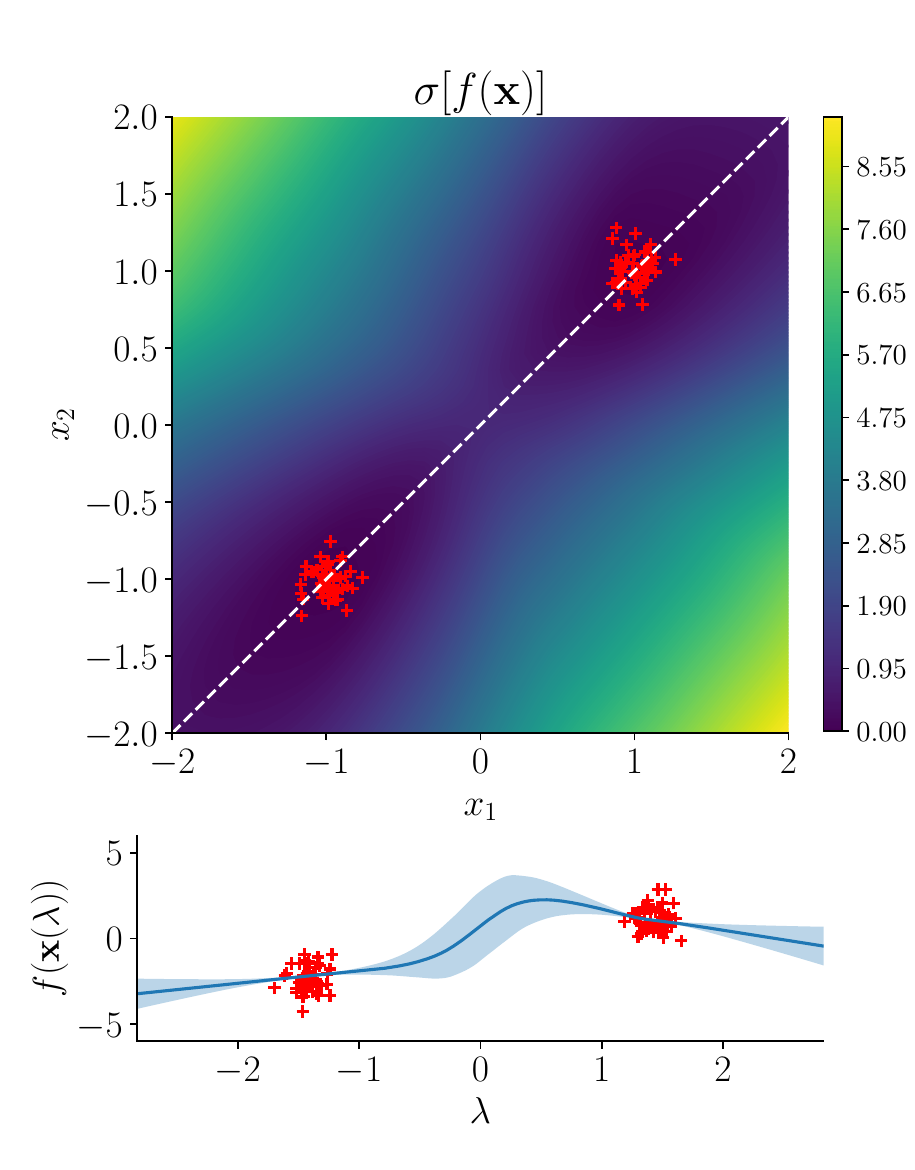}
        \caption{HMC}
    \end{subfigure}%
    \begin{subfigure}[b]{0.2\columnwidth}
        \includegraphics[width=\textwidth]{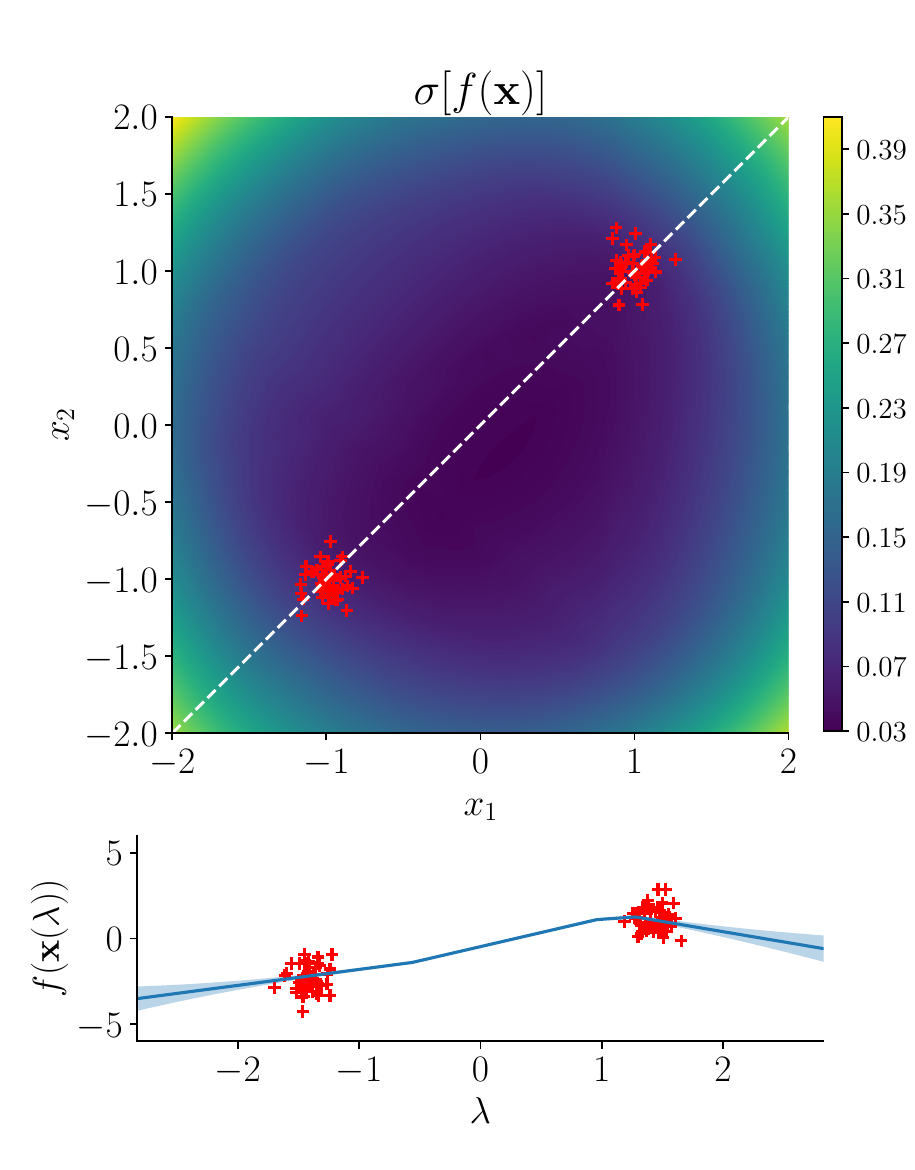}
        \caption{MFVI}
    \end{subfigure}%
    \begin{subfigure}[b]{0.2\columnwidth}
        \includegraphics[width=\textwidth]{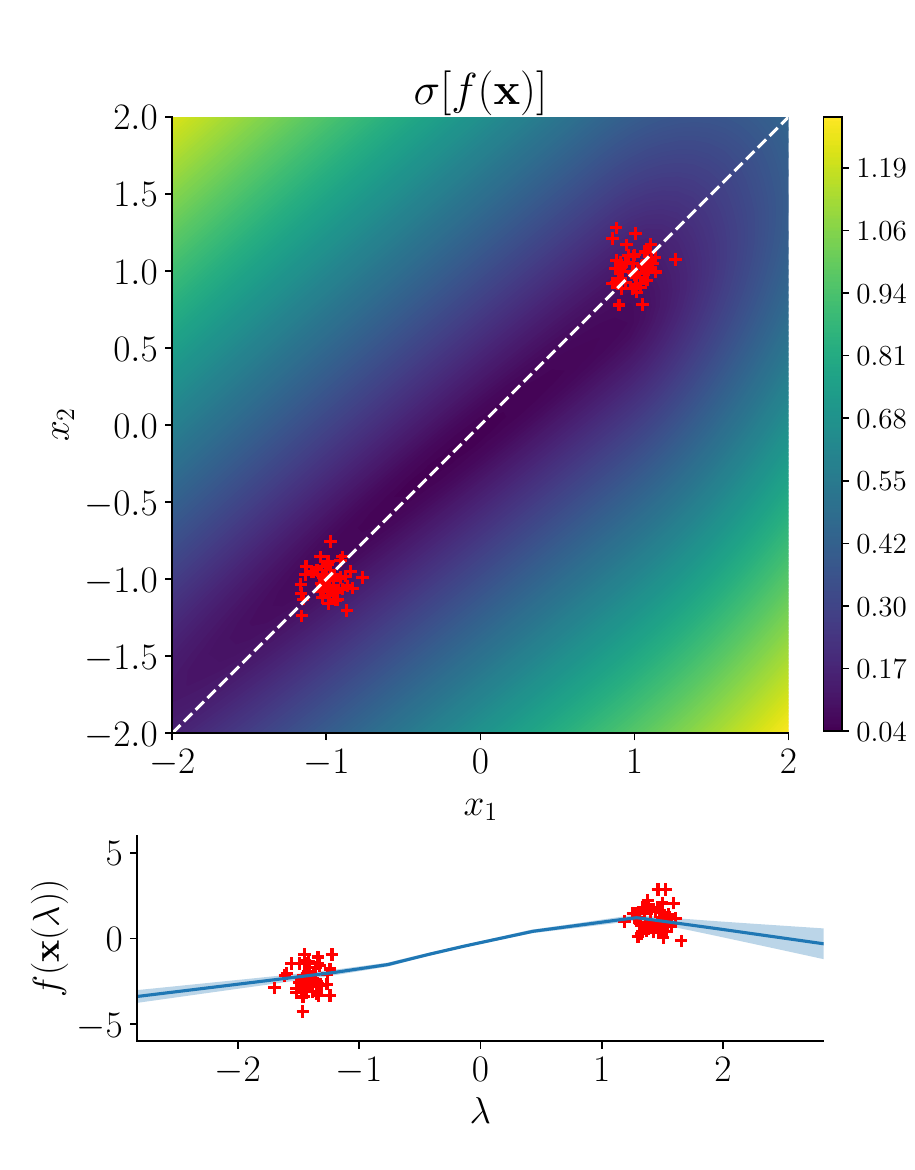}
        \caption{MCDO}
    \end{subfigure}
    \caption{Regression on a 2D synthetic dataset (red crosses). The colour plots show the standard deviation of the output, $\sigma[f(\mathbf{x})]$, in 2D input space. The plots beneath show the mean with 2-standard deviation bars along the dashed white line (parameterised by $\lambda$). MFVI and MCDO are overconfident for $\lambda \in [-1, 1]$.}
    \label{fig:2D_dataset}
    \vspace{-.3cm}
\end{figure}

\subsection{Model Misspecification in Gaussian Wasserstein Inference}\label{ap:model_misspecification}

The generalized loss in Appendix \ref{ap:Batch_Mode} is a valid optimization target for any $m_P,m_Q \in L^2(\calX,\rho,\bbR)$ and any trace-class kernels $k$ and $r$. This gives the user a lot of abilities to specify different models, by experimenting with various choices, specifically for $m_Q$ and $r$. However with great power comes great responsibility: it is quite easy to misspecify GWI. To illustrate the issue let us use a periodic kernel $k$ \citep{duvenaud2014kernel} given as 
\begin{equation}
k(x,x'):= \sigma_f^2 \exp\big( - \frac{1}{\alpha^2} \sin^2 ( \pi |x-x'|/p ) \big)
\end{equation}
and the SVGP kernel $r$ in \eqref{eq:r_SVGP}. By the definition of $r$ the uncertainty will be low for points \textit{similiar} to the inducing points $Z$, i.e. for points $x \in \calX$ $k(x,z_m) \approx \sigma_f^2$ for all $m=1,\hdots,M$. A problem now occurs, if the posterior mean $m_Q$ does not respect the knowledge embedded in $k$ and $r$. Lets for example use a simple fully connected deep neural network $m_Q$ and choose the point $x^*:=z_1+ 10p $. Assume further that $z_1,\hdots,z_M < x^*$. Then we get $k(x^*,z_m) = k(z_1,z_m)$ for all $m=1,\hdots,M$ due to the periodicity of $\sin(x)$ and therefore $r(x^*,x^*)=r(z_1,z_1)$. It is however very unlikely that the neural network will predict $m_Q(z_1)$ as well as $m_Q(x^*)$ since it is unaware of this periodicity.

This small example should illustrate that it is crucial that $m_Q$ is compatible with the prior knowledge reflected in $k$ and $r$. However, note that this problem is not present for our model, GWI-net. The ARD kernel encodes the inductive bias that the underlying function is infinitely differentiable and that points close to each other have highly correlated functional outputs. A simple fully connected DNN with tanh activation function is indeed smooth and further it is reasonable to assume that predictions are more unreliable the further they are from the data (as measured by the squared euclidean distance). The ARD kernel is in this sense compatible with a fully connected DNN.

It shall be noted that the DNN used for the classification examples in \eqref{sec:exp_classification} used convolutional layers as explained in Appendix \ref{ap:Implement_Details_Class}. This can be understood as embedding prior knowledge about translation equivariance into the DNN \citep[Chapter 9.4]{goodfellow2016deep}. It might therefore be desirable to use a prior kernel $k$ that embeds similar properties such as the kernel suggested by \citet{van2017convolutional}. We considered this to be beyond the scope of this paper but the interaction of DNN architecture and the choice of prior kernels is an interesting avenue for future research.

\subsection{Details on computational resources used}
\label{computational resources}
For all our experiments, we distributed our jobs across 8 Nvidia V100 cards.

\subsection{Additional plots for 1D experiments}\label{ap:addtional_plots}
In \Cref{1d_illustration_2} we compare GWI-net, GWI-SVGP and SVGP on one-dimensional toy data. Note that all three methods use the same posterior kernel, but GWI-net differs from GWI-SVGP in terms of the posterior mean function. GWI-SVGP and SVGP have the same posterior mean but differ in terms of the objective function used for training.
\begin{figure}[htb!]
    \centering
    \includegraphics[width=0.33\linewidth]{1d_plots/GWI_1_1000.png}%
        \includegraphics[width=0.33\linewidth]{1d_plots/GWI_2_1000.png}%
            \includegraphics[width=0.33\linewidth]{1d_plots/GWI_3_1000.png}

    \includegraphics[width=0.33\linewidth]{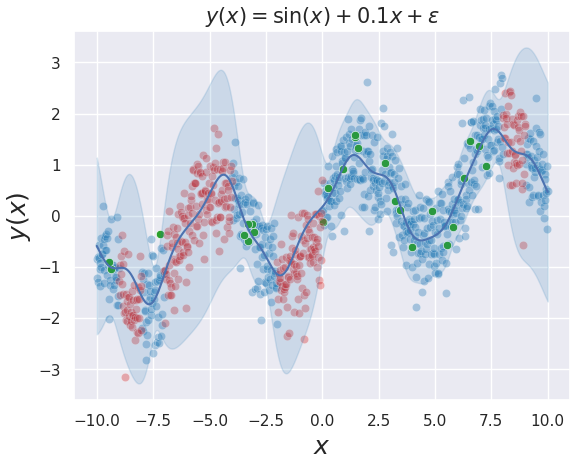}%
        \includegraphics[width=0.33\linewidth]{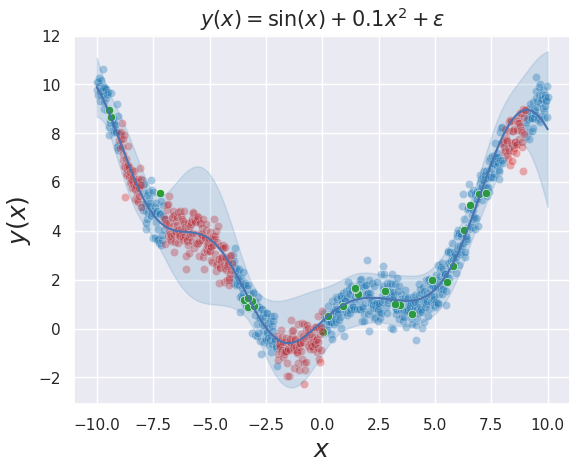}%
            \includegraphics[width=0.33\linewidth]{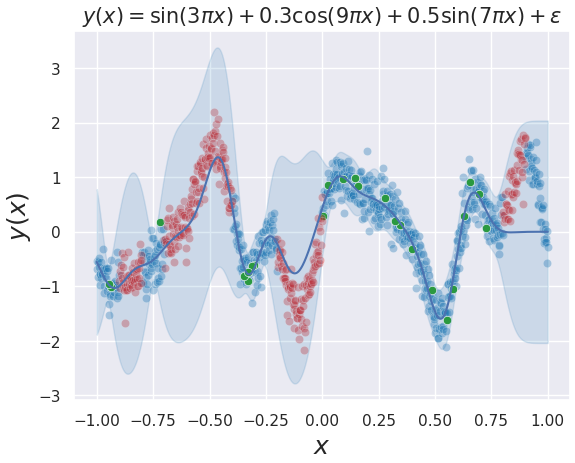}

    \includegraphics[width=0.33\linewidth]{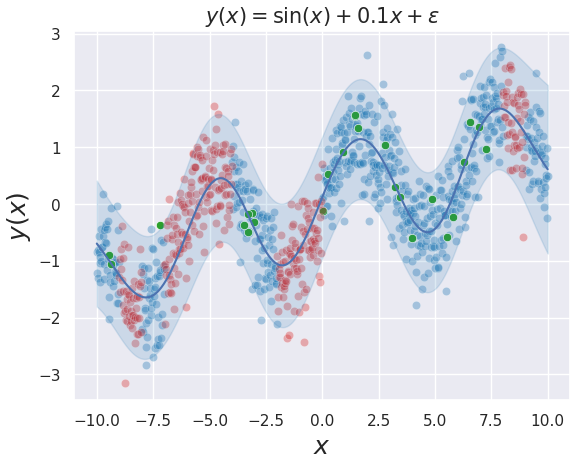}%
        \includegraphics[width=0.33\linewidth]{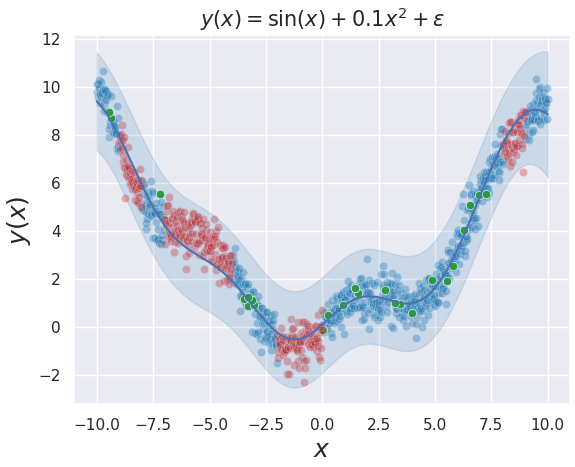}%
            \includegraphics[width=0.33\linewidth]{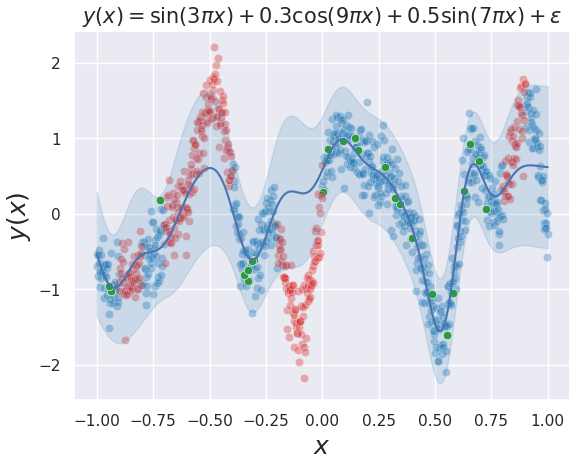}
            
    \caption{{\color{colorA} \samplesquare{}}: Training data $\quad$ {\color{colorB} \samplesquare{}}: Unseen data $\quad$ {\color{colorC} \samplesquare{}}: Inducing points  \\
    We query the above functions at $N=1000$ equidistant points and add white noise with $\epsilon \sim \mathcal{N}(0,0.5^2)$. We use $M=30$ inducing points and train our method as described in Appendix \ref{ap:Implementation}. The plot shows $m_Q(x)\pm1.96\sqrt{\mathbb{V}[Y^*(x)|Y]}$ where $\mathbb{V}[Y^*(x)|Y]$ is the posterior predictive variance given as $r(x,x)+\sigma^2$. Here the fitted models from top to bottom are GWI-net, GWI-SVGP and SVGP. }
    \label{1d_illustration_2}
\end{figure}
\subsection{Empirical estimation error of 2-Wasserstein distance}\label{Ap:wasserstein_estimation error}

The approximation quality of the 2-Wasserstein distance is determined by the approximation quality of the spectrum of the appearing covariance operators. For most kernels in practice like SE or Matern kernel, the spectrum decays very quickly, which is why using the first 100 eigenvalues often empirically seems to be sufficient to approximate the spectrum and therefore the 2-Wasserstein distance. We plot the magnitude of the first 100 positive eigenvalues (sorted on magnitude) for datasets BOSTON, CONCRETE, ENERGY, WINE and YACHT in \Cref{fig:eigenvalues}.

\begin{figure}[htb!]
    \centering
        \begin{subfigure}[b]{\columnwidth}
        \includegraphics[width=\textwidth]{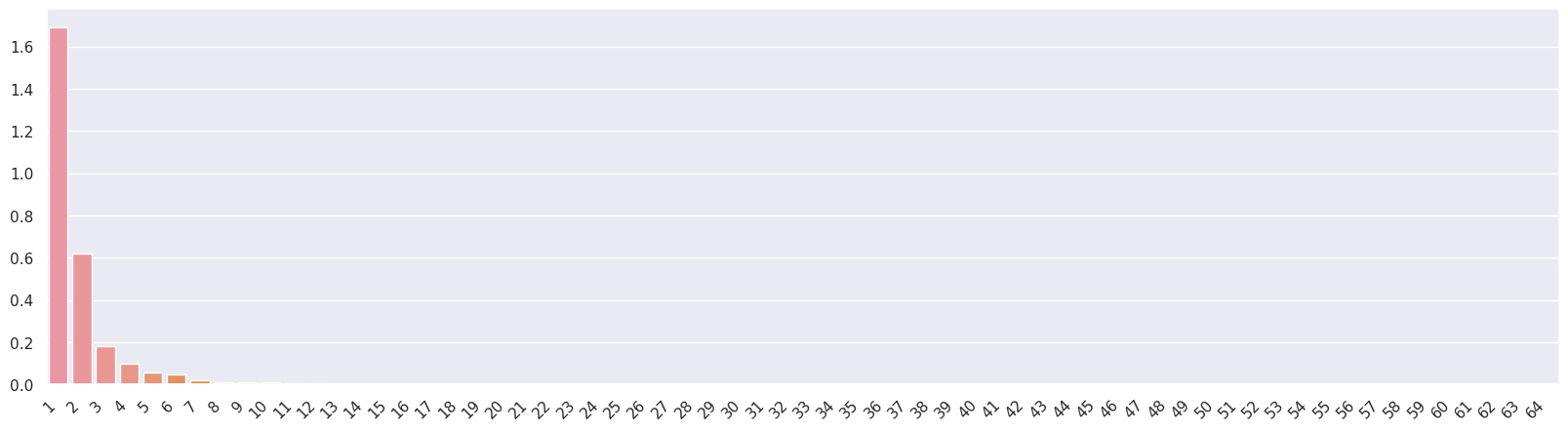}
        \caption{BOSTON}
    \end{subfigure}
    \begin{subfigure}[b]{\columnwidth}
        \includegraphics[width=\textwidth]{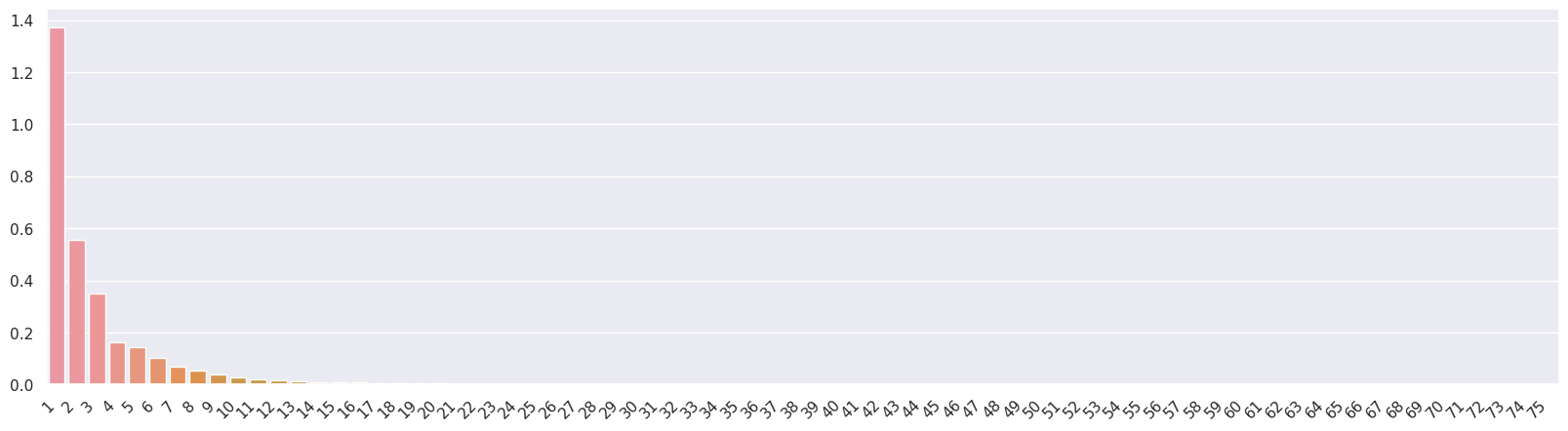}
        \caption{CONCRETE}
    \end{subfigure}
    \begin{subfigure}[b]{\columnwidth}
        \includegraphics[width=\textwidth]{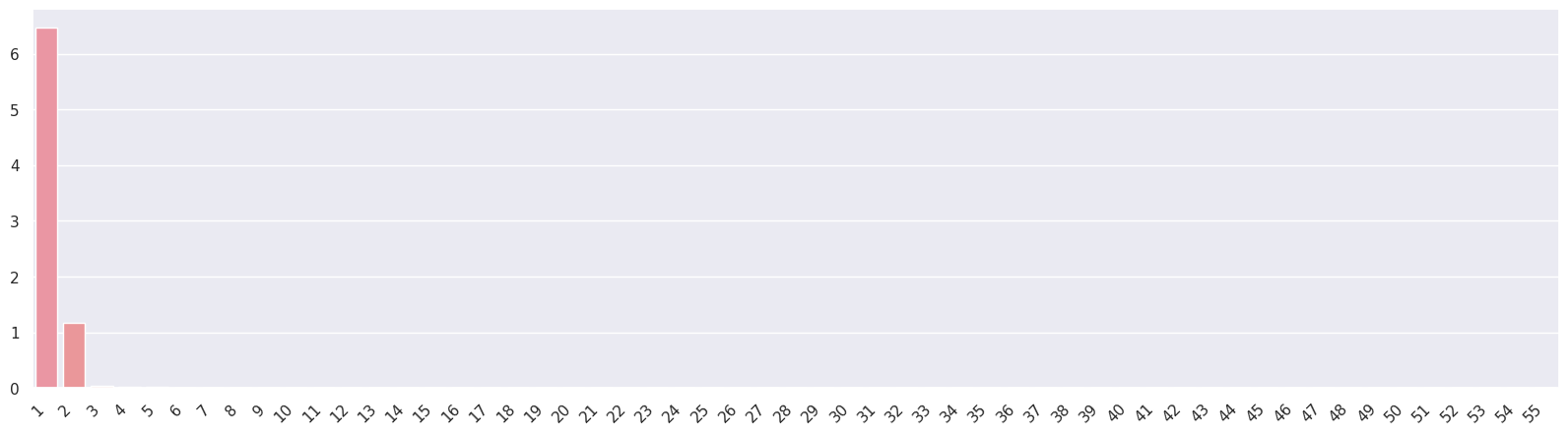}
        \caption{ENERGY}
    \end{subfigure}
    \begin{subfigure}[b]{\columnwidth}
        \includegraphics[width=\textwidth]{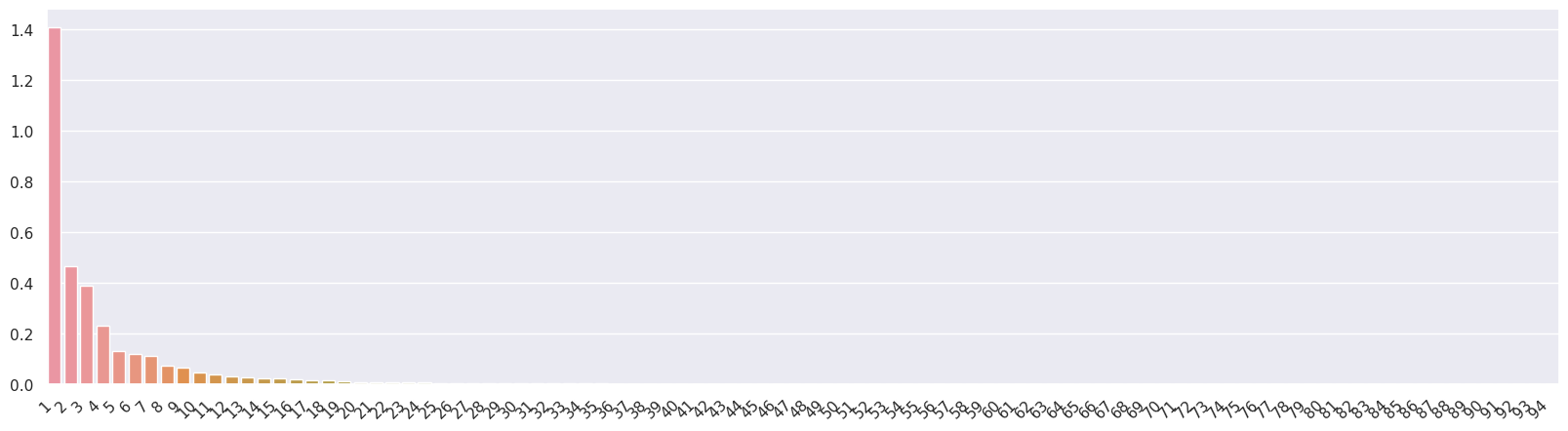}
        \caption{WINE}
    \end{subfigure}
    \begin{subfigure}[b]{\columnwidth}
        \includegraphics[width=\textwidth]{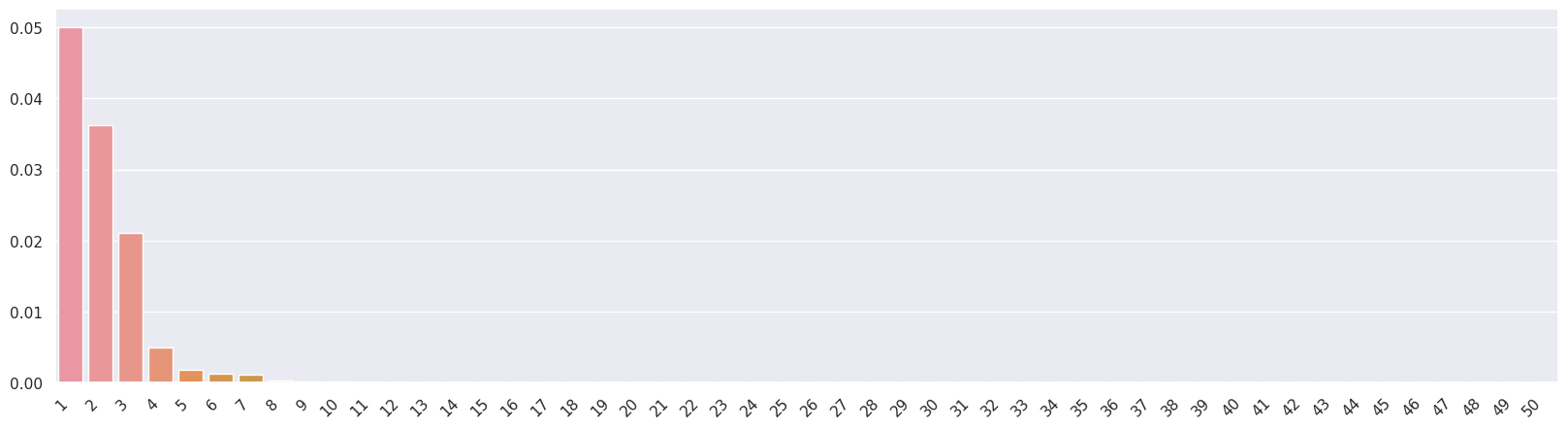}
        \caption{YACHT}
    \end{subfigure}
    \caption{The first 100 positive eigenvalues of $r(X_S,X)k(X,X_S)$ for datasets BOSTON, CONCRETE, ENERGY, WINE and YACHT. }
    \label{fig:eigenvalues}
    \vspace{-.3cm}
\end{figure}

We see in \Cref{fig:eigenvalues} that eigenvalues indeed decay fast.

\end{document}